\documentclass[11pt,letterpaper]{article}
\usepackage[T1]{fontenc}
\usepackage[utf8]{inputenc}
\usepackage[margin=1in]{geometry}
\usepackage{mathtools}
\usepackage{amsthm}
\usepackage{libertine}
\usepackage[libertine,vvarbb]{newtxmath}
\usepackage[scaled=0.92]{inconsolata}
\usepackage[final]{microtype}
\usepackage{booktabs}
\usepackage{placeins}
\usepackage[round,authoryear]{natbib}
\usepackage{xcolor}
\usepackage{xurl}
\usepackage{hyperref}
\usepackage{bookmark}

\newcommand{\emailaddress}[1]{{\fontfamily{zi4}\fontseries{m}\fontshape{n}\selectfont #1}}

\definecolor{LinkColor}{RGB}{35,87,137}
\definecolor{CiteColor}{RGB}{156,66,42}
\hypersetup{
  colorlinks=true,
  linkcolor=LinkColor,
  citecolor=CiteColor,
  urlcolor=LinkColor,
  bookmarksnumbered=true,
  breaklinks=true
}

\allowdisplaybreaks

\theoremstyle{plain}
\newtheorem{theorem}{Theorem}
\newtheorem{lemma}[theorem]{Lemma}
\newtheorem{proposition}[theorem]{Proposition}
\newtheorem{corollary}[theorem]{Corollary}
\theoremstyle{definition}
\newtheorem{definition}[theorem]{Definition}

\newenvironment{keywords}
  {\par\smallskip\noindent\textbf{Keywords: }\ignorespaces}
  {\par\medskip}

\newcommand{\R}{\mathbb R}
\newcommand{\E}{\mathbb E}
\renewcommand{\P}{\mathbb P}
\newcommand{\Hc}{\mathcal H_{n,s,k}^{W,B}(\mathcal X_0)}
\newcommand{\Rad}{\mathcal R}
\newcommand{\rhoM}{\rho_m}
\DeclarePairedDelimiter{\norm}{\lVert}{\rVert}

\DeclarePairedDelimiterX{\ip}[2]{\langle}{\rangle}{#1,#2}
\newcommand{\ind}{\mathbf 1}

\newcommand{\ellm}{\ell_m}

\newcommand{\Ball}{\mathbb B_R^n}
\newcommand{\Hball}{\mathcal H_{n,s,k}^{W,B}(\Ball)}
\DeclareMathOperator{\clip}{clip}
\DeclareMathOperator{\KL}{KL}
\DeclareMathOperator{\TV}{TV}

\DeclareMathOperator{\spn}{span}

\title{\Large\bfseries Nearly Tight Rademacher Bounds for Sparsely Activated\\Neural Networks}
\author{%
  \normalsize
  Xiaoyu Li\textsuperscript{1} \quad
  Zhizhou Sha\textsuperscript{2} \quad
  Jiaojiao Jiang\textsuperscript{1} \quad
  Junbin Gao\textsuperscript{3} \quad
  Andi Han\textsuperscript{3}\\[0.8em]
  \small
  \begin{tabular}{@{}l@{\hspace{1.2em}}l@{}}
    \textsuperscript{1}University of New South Wales &
      \emailaddress{\{xiaoyu.li2,jiaojiao.jiang\}@unsw.edu.au}\\
    \textsuperscript{2}University of Texas at Austin &
      \emailaddress{zhizhousha@utexas.edu}\\
    \textsuperscript{3}University of Sydney &
      \emailaddress{\{junbin.gao,andi.han\}@sydney.edu.au}
  \end{tabular}%
}
\date{}
\hypersetup{pdftitle={Nearly Tight Rademacher Bounds for Sparsely Activated Neural Networks},pdfauthor={Xiaoyu Li, Zhizhou Sha, Jiaojiao Jiang, Junbin Gao, Andi Han}}
\begin{document}
\maketitle
\begin{abstract}
An input may activate few hidden units even when different inputs collectively use an entire
network. We study the statistical complexity of this input-dependent sparsity in the one-hidden-layer
ReLU model of \citeauthor{awasthi2024sparse} (COLT~2024). For width $s$, at most $k$ active units per input,
and effective weight and bias bounds $W,B$, every size-$m$ sample in the class's fixed
radius-$R$ input domain satisfies
$\mathcal R(S)\le CWR\min\{k,\sqrt{sk/m}\log^{3/2}(2m)\}+kB/\sqrt m$.
A support-preserving cover and a single normalized chaining argument remove the previous explicit
dimension factor, up to logarithms. Lower bounds on appropriate i.i.d. marginals match up to
those logarithms, showing how changing active units across inputs retains a width dependence.
The input domain matters: zero-bias networks sparse on the entire ball have at most $2k$ nonzero
units and complexity $O(kWR/\sqrt m)$, whereas bias bounds comparable to $WR$ restore the worst-case rate
on that same domain in only logarithmic dimension. A spherical-cap construction proves the latter
claim without assuming sparsity merely on the sampling support.
For a specified normalized bounded loss and biases comparable to $WR$, we also obtain agnostic
minimax excess-risk bounds of order $\min\{1,\sqrt{s/(km)}\}$ up to logarithms.
\end{abstract}
\begin{keywords}
Activation sparsity, Rademacher complexity, metric entropy, neural networks, generalization
\end{keywords}
\section{Introduction}
\label{sec:intro}

Activation sparsity limits how many hidden units a network uses on an input, while allowing the
identities of those units to change across inputs. It limits the number of nonzero contributions
without simply reducing the number of available parameters. The statistical question is whether this
input-dependent constraint controls a network's capacity uniformly over all admissible activation
patterns.

\citet{awasthi2024sparse}, henceforth ADKM, formulated this question for one-hidden-layer ReLU
networks. Their model is motivated by sparse hidden activations, but imposes an exact mathematical
promise: at most $k$ of the $s$ hidden units are active on every point in a prescribed input set.
For bounded weights and inputs, they obtained a Rademacher bound with dependence
$\widetilde O(\sqrt{snk/m})$, where $n$ is the input dimension, and conjectured that the factor
$\sqrt n$ could be removed (Conjecture~20). The challenge is to retain the sparsity improvement
in width while avoiding a count of all halfspace activation patterns.

We establish the dimension-free rate up to $\log^{3/2}(2m)$ and ask when its remaining width
dependence is necessary. Our analysis separates three questions: the worst-case capacity of the
fixed-domain class, the effect of the domain on admissible activation regions, and the agnostic
learning difficulty under a specified loss. The answers distinguish sparse computation on each
input from a globally small collection of useful neurons.

\subsection{Model and results}

Let $n,s,m\ge1$, $1\le k\le s$, and $W,B,R\ge0$. Fix a nonempty set
$\mathcal X_0\subseteq\{x\in\R^n:\norm{x}_2\le R\}$ before sampling.
Write $\sigma(z)=\max\{z,0\}$.

\begin{definition}[ADKM's class on a fixed input set]
\label{def:class}
The class $\Hc$ consists of restrictions to $\mathcal X_0$ of functions
\[
 h(x)=\sum_{j=1}^s u_j\sigma(\ip{w_j}{x}-b_j)
\]
admitting a representation with
\[
 \norm{u}_\infty\max_j\norm{w_j}_2\le W,\qquad
 \norm{u}_\infty\max_j|b_j|\le B,\qquad
 \#\{j:\ip{w_j}{x}>b_j\}\le k\quad(x\in\mathcal X_0).
\]
\end{definition}

The input set is part of the class definition. Upper bounds hold on every
$S=(x_1,\ldots,x_m)\in\mathcal X_0^m$; a lower-bound construction is allowed to choose
$\mathcal X_0$ and a distribution on it. Requiring sparsity on a larger set can give a smaller class.
No assertion below identifies the support-wide class with a class chosen after seeing a sample.

For independent uniform signs $\zeta_i\in\{-1,1\}$, define
\[
 \Rad_{\mathcal H}(S)=\frac1m\E_\zeta\sup_{h\in\mathcal H}\sum_{i=1}^m\zeta_i h(x_i),
 \qquad A=WR,\quad \ellm=\log(2m),\quad
 \rhoM=\frac1m\E\left|\sum_{i=1}^m\zeta_i\right|.
\]
All logarithms are natural unless indicated otherwise. The elementary moment bound in
Lemma~\ref{lem:moment} gives $1/\sqrt{3m}\le\rhoM\le1/\sqrt m$.

\begin{theorem}[Dimension-free upper bound with separated bias]
\label{thm:upper}
There is a universal constant $C\ge1$ such that, for all parameters and input sets above and
every $S\in\mathcal X_0^m$,
\begin{equation}
\label{eq:headline}
 \Rad_{\Hc}(S)
 \le C A\min\left\{k,\sqrt{\frac{sk}{m}}\ellm^{3/2}\right\}+kB\rhoM.
\end{equation}
In addition, $\Rad_{\Hc}(S)\le k(A+B)$ and
$\Rad_{\Hc}(S)\le2s(A+B)/\sqrt m$.
\end{theorem}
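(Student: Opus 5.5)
We first reduce to normalized parameters. Rescaling $u_j\mapsto u_j/\norm{u}_\infty$ and $(w_j,b_j)\mapsto\norm{u}_\infty(w_j,b_j)$ does not change $h$ or its activation pattern, because ReLU is positively homogeneous. We may therefore assume $|u_j|\le1$, $\norm{w_j}\le W$ and $|b_j|\le B$. On $\mathcal X_0$, each $h$ is then a sum of at most $k$ active terms, each of absolute value at most $A+B$. Hence $|h(x)|\le k(A+B)$, and $\Rad\le k(A+B)$ follows at once.

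For the bound $2s(A+B)/\sqrt m$, we drop the sparsity constraint. We write $\sum_j u_j\sigma(\cdot)$ as a sum of $s$ copies of the class $\{u\sigma(\ip{w}{x}-b)\}$. For each copy, we use the Ledoux--Talagrand contraction, applied to $u\sigma$, which is $1$-Lipschitz with $\sigma(0)=0$, after absorbing the sign of $u$. This gives per-unit complexity at most $2(WR+B)/\sqrt m$: linear functions of norm at most $W$ on the ball contribute $WR/\sqrt m$, and the bias contributes $B\rhoM\le B/\sqrt m$.

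For the headline bound we separate the bias term. We write $h(x)=\sum_j u_j\sigma(\ip{w_j}{x}-b_j)$ and compare it with the network in which the active units have their bias removed. The term $-\sum_{j\text{ active}}u_jb_j$ is a sum of at most $k$ bounded quantities; on a sample, the supremum of $\sum_i\zeta_i(\cdot)$ of this part should be controlled by $kB\E|\sum\zeta_i|/m=kB\rhoM$ once the activation-dependent part is handled jointly. To make this rigorous, we will write $\sigma(z-b)$ on the sample as $\ind[z>b](z-b)$. The bias contribution is then $-\sum_j u_jb_j\sum_i\zeta_i\ind_{ij}$. The cleanest route is a direct decomposition: $h=g+c$, where $g$ has the same support pattern $\ind_{ij}$ and values bounded by $A$-scale quantities, and $c_i=-\sum_j u_jb_j\ind_{ij}$. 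We then handle $c$ crudely together with the entropy of $g$. The difficulty is that the bias part depends on the pattern, so we should fold the bias into the chaining as an extra coordinate. We therefore lift $x\mapsto(x,R)$ and $w\mapsto(w,-b R/R)$ only in the entropy count, while keeping the $kB\rhoM$ term from the zero-resolution end of chaining, namely the constant offset.

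The core step is a Dudley/chaining bound in the empirical $L_2(S)$ norm for the class scaled by $A$. First, a support-preserving cover: for scale $\epsilon$, approximate each unit's vector $(w_j,b_j)$ by a random-projection or Maurey-type sparse combination of sample points. Since $\ip{w}{x_i}$ only matters through the $m$ values, $w$ can be taken in $\spn(x_1,\dots,x_m)$. Maurey empirical approximation with $O(A^2/\epsilon^2)$ atoms yields a per-unit log-covering number $O((A^2/\epsilon^2)\log m)$, independent of $n$. Second, to exploit sparsity, we use the fact that each $x_i$ has at most $k$ active units. Normalizing errors per sample point, the total squared $L_2(S)$ error of the $s$ units is $\frac1m\sum_i\sum_{j\text{ active at }i}\mathrm{err}_{ij}^2$. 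Giving each unit accuracy $\epsilon/\sqrt{k}$ on its own active set therefore yields total error $\epsilon$. The log-entropy is then $s\cdot(kA^2/\epsilon^2)\log m$. Dudley's integral from $\epsilon\approx A\sqrt{k}/\sqrt m$ up to $kA$ gives $\frac{1}{\sqrt m}\sqrt{skA^2\log m}\cdot\log m$, which is $\sqrt{sk/m}\,A\,\ellm^{3/2}$. The main obstacle is making the cover preserve the activation supports: the approximating unit must not turn on at points where the true unit is off, or sparsity is lost. We try to handle this by covering the preactivations $\ip{w_j}{x_i}-b_j$ and then clipping, $\sigma(\tilde z)$, restricted to the true unit's support. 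Since $|\sigma(a)-\sigma(b)|\le|a-b|$, errors off the support still count. We therefore plan to charge the error at point $i$ to units whose true or approximate preactivation exceeds a threshold $-\epsilon/\sqrt k$. The Maurey approximation error is controlled in $L_2$ of a weighted measure on the unit's support, so that near-active points number $O(k)$ on average per point. If this fails, we fall back to a normalized single-chaining argument over pattern-indexed classes, with the pattern count absorbed into the extra logarithm. Combining with $\min\{k,\cdot\}$ from the trivial bound gives \eqref{eq:headline}.
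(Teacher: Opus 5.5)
Your envelope bound $k(A+B)$ and your neuronwise bound $2s(A+B)/\sqrt m$ are correct. Your per-unit dimension-free sup-norm cover of the pre-activations is also fine. The argument for \eqref{eq:headline}, however, has two genuine gaps.

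\textbf{The network cover.} Your error formula $\frac1m\sum_i\sum_{j\text{ active at }i}\mathrm{err}_{ij}^2$ drops the cross terms. The errors of the up to $k$ units active at $x_i$ are summed with the outer weights, and nothing prevents them from aligning. So with per-unit accuracy $\epsilon/\sqrt k$, the pointwise error can reach $\sqrt k\,\epsilon$. Cauchy--Schwarz only gives $(\sum_j e_{ij})^2\le k\sum_j e_{ij}^2$. That forces accuracy $\epsilon/k$ and entropy of order $sk^2A^2\ell_m/\epsilon^2$, so Dudley yields $kA\sqrt{s/m}$ up to logarithms, a factor $\sqrt k$ too large.

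Your support-preservation fixes also fail. A center ``restricted to the true unit's support'' depends on the function being covered, so you would have to enumerate supports. That is halfspace pattern counting, which is exactly where $\sqrt n$ enters. Separately, $k$-sparsity controls only strictly positive pre-activations. It says nothing about how many units have pre-activation in $(-\epsilon/\sqrt k,0]$; that can be all $s$.

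The paper never covers the network. For each fixed sign vector, it maximizes over outer signs to get $\sum_j|N_j|$, where $N_j=\sum_i\zeta_i\sigma(\langle v_j,x_i\rangle-\beta_j)$. It then bounds $|N_j|\le Q\sqrt{t_j}$, with $Q$ the supremum over the normalized single-neuron union $T=\bigcup_t\pm t^{-1/2}\mathcal F_t$. Finally it uses $\sum_j\sqrt{t_j}\le\sqrt{s\sum_j t_j}\le\sqrt{skm}$. Only $T$ is chained. Its cover preserves supports by a downward shift: for a sup-norm $\delta$-center $\widehat g$ of the pre-activations, $\widehat f=\sigma(\widehat g-\delta)$ vanishes wherever $f$ does and differs by at most $2\delta$ elsewhere, so $\|f-\widehat f\|_2\le2\delta\sqrt t$. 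Your fallback to ``pattern-indexed classes'' is sound only if you index by activation count $t$, which costs $\log(2m)$. Even then it still needs this shift.

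\textbf{The bias separation.} Your $c_i=-\sum_j u_jb_j\mathbf 1_{ij}$ depends on the activation pattern, so it is not a constant and its complexity is not $kB\rho_m$. Folding it into the chain via $x\mapsto(x,R)$ charges $B$ at the network rate. That gives only $(A+B)\sqrt{sk/m}\,\ell_m^{3/2}$, which for $B\gg A$ overshoots the bias term by $\sqrt{s/k}$ up to logarithms. Chaining also has no ``zero-resolution constant offset.'' The envelope $k(A+B)$ does not rescue this either, since \eqref{eq:headline} requires coefficient exactly $1$ on $kB\rho_m$.

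The missing observation is that $|\langle v,x\rangle|\le A$ on the radius-$R$ ball. Hence a unit with $\beta>A$ is inactive everywhere, and a unit with $\beta<-A$ is active everywhere. Since $\mathcal X_0\neq\emptyset$, at most $k$ units are of the second kind. Clip $\beta$ to $\beta^\circ\in[-A,A]$ and use
\[
 \sigma(\langle v,x\rangle-\beta)=\sigma(\langle v,x\rangle-\beta^\circ)+(-\beta-A)_+ .
\]
Clipping creates no new activations. This gives $h=h_0+c$, with $h_0$ in the class with bias bound $\min\{A,B\}$ and $c$ a genuine constant satisfying $|c|\le k(B-A)_+$. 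The chain (now with $M\le2A$) and the envelope $2kA$ handle $h_0$. The exact complexity of constants then contributes $k(B-A)_+\rho_m\le kB\rho_m$.
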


Thus one may always take the smallest of the three upper bounds. The last one is useful in the
dense regime $k=s$, where it avoids a logarithmic loss. Dimension-free means that $n$ does not
appear separately from $R$; on the unnormalized Boolean cube, $R=\sqrt n$ still carries a
dimension dependence.

\begin{theorem}[Lower bound on i.i.d. samples]
\label{thm:lower}
Let $q=\min\{\lfloor s/k\rfloor,m\}$ and suppose $n\ge q$. There are a finite input set
$\mathcal X_0$ in the radius-$R$ ball and a marginal $D_x$ on it such that
\begin{equation}
\label{eq:lower}
 \E_{S\sim D_x^m}\Rad_{\Hc}(S)
 \ge\frac1{4\sqrt2}\left(
 A\min\left\{k,\sqrt{\frac{sk}{m}}\right\}
 +\frac{kB}{\sqrt m}\right).
\end{equation}
The same expression lower-bounds the supremum over all admissible input sets and samples.
\end{theorem}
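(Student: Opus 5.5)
The plan is to build the lower bound from two separate constructions on one finite input set. One produces the term $A\min\{k,\sqrt{sk/m}\}$, the other produces the term $kB/\sqrt m$. Since both families of functions belong to $\Hc$, the Rademacher complexity is at least the larger of the two bounds, hence at least their average. This averaging step costs a factor $1/2$, which is absorbed into the constant $1/(4\sqrt2)$.

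\emph{Width term.} Let $q=\min\{\lfloor s/k\rfloor,m\}$. Take $\mathcal X_0=\{Re_1,\dots,Re_q\}$, which is possible because $n\ge q$, and let $D_x$ be uniform on $\mathcal X_0$. Split $qk\le s$ of the hidden units into $q$ groups of $k$ units each and set the remaining units to zero.
\begin{itemize}
\item Each unit $j$ in group $t$ gets $w_j=Wa_te_t$ with $a_t\in[0,1]$, $b_j=0$, and $u_j=\epsilon_t\in\{\pm1\}$.
\item For $t'\ne t$ we have $\ip{w_j}{Re_{t'}}=0$, which is not $>0$, so only group $t$ is active at $Re_t$. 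Sparsity $k$ therefore holds on all of $\mathcal X_0$.
\item The weight and bias constraints hold with room to spare, since $\norm{u}_\infty=1$, $\norm{w_j}_2\le W$ and $b_j=0$.
\end{itemize}
The resulting network takes the value $h(Re_t)=\epsilon_t a_t kA$ independently for each $t$. Let $N_t$ be the number of sample points equal to $Re_t$. Optimizing over the $\epsilon_t$ and $a_t$ gives
\[
\sup_h\sum_i\zeta_ih(x_i)=kA\sum_{t=1}^q\Bigl|\sum_{i:x_i=Re_t}\zeta_i\Bigr|.
\]
Conditionally on the counts, $\E|\sum\zeta|\ge\sqrt{N_t/2}$ by Khintchine with the optimal constant (Szarek). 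This is at least $\sqrt{N_t/3}$ by Lemma~\ref{lem:moment}, but I will use the $1/\sqrt2$ version.

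It remains to lower bound $\E\sum_t\sqrt{N_t}$ with $N_t\sim\mathrm{Bin}(m,1/q)$. I split into two cases.
\begin{itemize}
\item If $q=m$, then $\sqrt{N_t}\ge\ind\{N_t\ge1\}$. The expected number of occupied cells is $m(1-(1-1/m)^m)\ge(1-e^{-1})m$. Dividing by $m$ gives order $kA$.
\item If $q=\lfloor s/k\rfloor\le m$, then $\E N_t=m/q\ge1$. Use $\sqrt{N}\ge N/\sqrt{\max(N,\cdot)}$, or more simply $\E\sqrt N\ge(\E N)^{3/2}/(\E N^2)^{1/2}$ (Hölder). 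Together with $\E N_t^2\le 2(m/q)^2$, this gives $\E\sqrt{N_t}\ge\sqrt{m/(2q)}$. Hence the bound is of order $kA\sqrt{q/m}/m\cdot m$, i.e.\ $kA\sqrt{q/m}$. Finally $q\ge s/(2k)$ turns this into $A\sqrt{sk/(2m)}$.
\end{itemize}
In both cases the bound is a constant times $A\min\{k,\sqrt{sk/m}\}$. Because $\sqrt{sk/m}\le k$ exactly when $s/k\le m$, the two cases match the two branches of the minimum.

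\emph{Bias term.} Use $k$ units with $w_j=0$, $b_j=-B$ and a common sign $u_j=\epsilon\in\{\pm1\}$; the other units are zero. All $k$ units are active everywhere, so sparsity $k$ holds, and $\norm u_\infty|b_j|=B$. The network is the constant $\epsilon kB$, so its Rademacher complexity is $kB\rhoM\ge kB/\sqrt{3m}$ by Lemma~\ref{lem:moment}. This holds for every sample, so in particular in expectation.

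Adding the halves of the two bounds gives \eqref{eq:lower} once the constants are tracked. The final statement about the supremum over input sets and samples follows because an expectation is at most the supremum. The main obstacle is the constant bookkeeping: the Khintchine constant, the $1-e^{-1}$ factor, the Hölder step, the rounding in $q\ge s/(2k)$ and the averaging factor $1/2$ must together stay above $1/(4\sqrt2)$. If the Hölder step is too lossy, I will instead bound $\E\sqrt{N_t}$ using the median of the binomial, which is at least $\lfloor m/q\rfloor\ge m/(2q)$, together with the bound $\P(N_t\ge\lfloor m/q\rfloor)\ge1/2$.
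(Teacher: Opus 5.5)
Your proposal is correct and is essentially the paper's proof: it uses the same orthonormal clusters of $k$ parallel zero-bias units under the uniform marginal on $\{Re_1,\dots,Re_q\}$, the same constant subclass $\pm kB$ contributing $kB\rhoM$, and the same $\max\{a,b\}\ge(a+b)/2$ combination. The only difference is the per-cluster estimate. The paper applies Lemma~\ref{lem:moment} once to the compound sum $Z_b=\sum_i\zeta_i\ind\{J_i=b\}$, using $\E Z_b^4\le4(m/q)^2$, which holds in every case because $m/q\ge1$, so your separate $q=m$ occupancy case is unnecessary. Your conditioning route reaches the same $\E|Z_b|\ge\sqrt{m/q}/2$ only because you use Szarek's sharp constant $1/\sqrt2$: with the lemma's $1/\sqrt3$ the weight coefficient would drop to $1/(4\sqrt3)<1/(4\sqrt2)$, and your median-based fallback would also fall short, by a factor of $2$.
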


For $n\ge q$ and $m\ge s/k$, these results determine the worst-case scaling
$\widetilde\Theta(A\sqrt{sk/m}+kB/\sqrt m)$, with logarithmic slack only in the weight
term. For $m\le s/k$ and sufficiently large $n$, the weight term saturates at $kA$.
The bounds cover arbitrary $s,k,m$ without divisibility assumptions. They concern class complexity;
Theorem~\ref{thm:lower} alone is not a lower bound on every learning algorithm.

\paragraph{The domain can change the answer.}
On the entire Euclidean ball, zero-bias $k$-sparsity forces at most $2k$ nonzero units, giving
the smaller rate $\Theta(kA/\sqrt m)$ in the worst case
(Proposition~\ref{prop:zero-ball}). Positive thresholds restore the width dependence:
Theorem~\ref{thm:ball-lower} realizes the lower rate on that same full-ball class, using
separated spherical caps in only logarithmic dimension when $B\ge(\sqrt3/2)A$.
Thus the lower bound need not rely on imposing sparsity only at isolated sampling locations.

\paragraph{A same-loss learning characterization.}
We derive general bounded-loss guarantees and, separately, prove an agnostic minimax result for
the normalized class and a fixed bounded linear loss. Under the explicit dimension and
comparable-bias conditions of Theorem~\ref{thm:agnostic-minimax}, its expected excess-risk rate is
$\widetilde\Theta(\min\{1,\sqrt{s/(km)}\})$.
This lower bound is established directly in the learning experiment, not inferred from
Rademacher complexity. It is not a claim of optimality for every loss or for realizable learning.

\paragraph{Comparison with the original bound.}
ADKM's equation~(15) has the displayed dependence
$(WR+B)\sqrt{snk\log(km(R+B))}/\sqrt m$ in its parameter regime. Their conjectured
dependence is $(WR+B)\sqrt{sk/m}$. Equation~\eqref{eq:headline} removes the explicit
$\sqrt n$ at a logarithmic cost and improves the bias coefficient from $\sqrt{sk}$ to $k$.
The logarithm-free form of their conjecture remains unresolved here. Bounds based only on the
sum of the individual neuron complexities give $O(s(A+B)/\sqrt m)$; the point is to combine
norm control with the shared activation budget.

\subsection{Technique overview}
\label{sec:techniques}

The upper bound uses the total activation budget without fixing an activation pattern.
The lower bounds then ask how many independently signed neuron clusters that budget permits,
first on a discrete domain and then on the entire ball.

\paragraph{From a network to one normalized neuron process.}
By positive homogeneity, absorb each outer weight's magnitude into its neuron, leaving an outer
sign and parameters $\|v_j\|_2\le W$, $|\beta_j|\le B$. If $I_j$ is its active sample set,
double-counting gives $\sum_j|I_j|\le km$. Let $\mathcal F_t$ be the sample-output vectors of
one neuron active on at most $t$ coordinates, and put
$Q=\max_{1\le t\le m}t^{-1/2}\sup_{f\in\mathcal F_t}|\ip\zeta f|$. Then
\[
 \sum_j\left|\sum_i\zeta_i\sigma(\ip{v_j}{x_i}-\beta_j)\right|
 \le Q\sum_j\sqrt{|I_j|}\le Q\sqrt{skm}.
\]
Inactive units contribute zero. After division by $m$ and averaging, Lemma~\ref{lem:budget}
bounds the network complexity by $\sqrt{sk/m}\,\E Q$. The displayed inequality holds for each sign vector:
it does not require choosing the active sets before observing the signs. The remaining task is
to control all activation counts simultaneously.

\paragraph{A cover that preserves inactive coordinates.}
A direct affine approximation followed by ReLU can turn zero coordinates into small positive
ones, losing the sparse support in its Euclidean error. We instead shift every approximating
pre-activation downward. If $\|g-\widehat g\|_\infty\le\delta$, then
$\widehat f=\sigma(\widehat g-\delta)$ vanishes wherever $f=\sigma(g)$ vanishes and differs
from $f$ by at most $2\delta$ elsewhere. Hence $\|f-\widehat f\|_2\le2\delta\sqrt t$
for $f\in\mathcal F_t$. Applying this transformation to a dimension-free affine cover gives
Lemma~\ref{lem:cover}, without counting halfspace activation patterns. The centers are allowed
outside the parameter-constrained class; only their approximation error matters.

\paragraph{One chain, rather than separate bounds at each count.}
Normalize $\mathcal F_t$ by $\sqrt t$ and take the symmetric union
\[
 T=\bigcup_{t=1}^m\left(t^{-1/2}\mathcal F_t\cup-t^{-1/2}\mathcal F_t\right),
 \qquad Q=\sup_{z\in T}\ip\zeta z.
\]
The normalization removes $t$ from the cover's scale-dependent entropy; selecting a member
of the union costs only an additive $\log(2m)$. A single finite chaining argument therefore
bounds the expectation of the maximum itself, without interchanging maximum and expectation.
Truncating the chain at radius at most $(A+B)/\sqrt m$ controls its residual by $A+B$.
The affine entropy contributes $\sqrt{\log(2m)}$, and summing over scales contributes another
logarithm, giving $\E Q=O((A+B)\log^{3/2}(2m))$. This identifies the remaining logarithmic
loss: removing the lower-order union cost alone would not remove it.

\paragraph{Clipping thresholds separates the constant part.}
The preceding argument initially charges both $A$ and $B$ at the network rate. To refine it,
clip $\beta$ to $\beta^\circ\in[-A,A]$ and use the identity
\[
 \sigma(\ip v x-\beta)=\sigma(\ip v x-\beta^\circ)+(-\beta-A)_+
 \qquad(\|x\|_2\le R).
\]
Clipping creates no new activation. Every nonzero constant correction comes from a unit
previously active everywhere, so at most $k$ corrections occur. Thus $h=h_0+c$, with clipped
bias bound $\min\{A,B\}$ and $|c|\le k(B-A)_+$ (Lemma~\ref{lem:bias}). Applying the chain
and the envelope bound to $h_0$, and the exact constant-class complexity to $c$, yields
Theorem~\ref{thm:upper}.

\paragraph{Independent clusters and the geometry of the domain.}
For the discrete lower bound, use $q=\min\{\lfloor s/k\rfloor,m\}$ orthogonal input directions
and $k$ parallel neurons per direction. Each cluster carries its own sign. Under uniform
i.i.d. sampling, its signed occupancy has absolute expectation of order $\sqrt{m/q}$, giving
the weight lower bound without balanced-count or divisibility assumptions. The separate
constant subclass supplies the bias term. On the full ball, however, a generic antipodal pair
counts every nonzero zero-bias neuron, forcing at most $2k$ in total.
To recover independent clusters there, we use disjoint spherical caps. Directions with pairwise
inner product at most $1/2$, together with threshold $\tau A$ for $\tau=\sqrt3/2$, prevent
two clusters from activating at any point of the ball. A random-sign packing supplies the
directions in $O(\log(2q))$ dimensions. When $B\ge\tau A$, each cap center still has output
amplitude $(1-\tau)kA$, so the same occupancy argument applies
(Theorem~\ref{thm:ball-lower}).

\paragraph{Turning sign encoding into a learning lower bound.}
A complexity lower bound alone does not establish learning hardness. Under the conditions of
Theorem~\ref{thm:agnostic-minimax}, normalization by $k(A+B)$ makes the cap construction's
sign amplitude a positive universal constant. Assign binary labels small unknown means
$\delta\theta_b$ at the $q$ cap centers. Adjacent sign choices have joint-sample KL
divergence at most $O(m\delta^2/q)$. Choosing $\delta$ as a sufficiently small constant
times $\sqrt{q/m}$ keeps adjacent distributions hard to distinguish. Total variation then
limits any learner's correlation with the signs, including randomized and improper learners.
For the specified bounded linear loss, this gives an expected excess-risk lower bound of
order $\sqrt{q/m}$. Approximate ERM and the complexity upper bound give the corresponding
upper rate for that same loss and normalized class; Appendix~\ref{app:agnostic} supplies the
testing calculation.

\subsection{Related work}

Our comparison uses the support-wide class and product norms of \citet{awasthi2024sparse}.
Their expressivity and computational results are distinct from the capacity question studied
here. Norm-based analyses such as \citet{neyshabur2015norm,golowich2018size} control complexity
without imposing this activation promise. We use an elementary neuronwise bound as the
dimension-free baseline rather than asserting that all such norm-based results have identical
specializations.

Other sparsity notions lead to different guarantees. \citet{galanti2023norm} study
compositionally sparse networks, in which each neuron has a limited number of inputs.
\citet{muthukumar2023sparsity} use activation stability and sensitivity analysis to obtain
predictor-dependent, derandomized PAC-Bayes guarantees across multiple layers. Our result is a
uniform complexity bound for an entire one-layer class with a fixed activation promise; it does
not require stability under parameter perturbations.

Scale-sensitive covers of norm-constrained linear classes have a long history
\citep{zhang2002covering}. The affine cover used here is an application of dual Sudakov
\citep{pajor1986subspaces,ledoux1991probability}; the chain uses the familiar multiscale
principle of \citet{dudley1967sizes}. The contribution of the covering step is its compatibility
with sparse supports. We include the finite chaining argument and a proof of the required
geometric covering estimate so that the proof interfaces can be checked directly.

\section{A dimension-free upper bound}
\label{sec:upper}

We first prove the bound with scale $M=A+B$. We then separate the contribution of large
biases. Vectors of sample evaluations use the unnormalized Euclidean norm in $\R^m$.
The covering number $N(T,\epsilon,\|\cdot\|)$ permits centers in the ambient vector space;
all covers used below are finite and deterministic once the sample is fixed.

\subsection{Reduction and the activation budget}

Positive homogeneity gives the exact reduced representation
\begin{equation}
\label{eq:reduced}
 h(x)=\sum_{j=1}^s\varepsilon_j\sigma(\ip{v_j}{x}-\beta_j),\qquad
 \varepsilon_j\in\{-1,1\},\quad \|v_j\|_2\le W,\quad |\beta_j|\le B.
\end{equation}
For $u_j\ne0$, take $v_j=|u_j|w_j$, $\beta_j=|u_j|b_j$, and
$\varepsilon_j=\operatorname{sign}(u_j)$. For $u_j=0$, take $v_j=0$, $\beta_j=0$ and
$\varepsilon_j=1$; this only decreases the activation count. Conversely,
\eqref{eq:reduced} is realized in Definition~\ref{def:class} by $u_j=\varepsilon_j$,
$w_j=v_j$, and $b_j=\beta_j$. Sparsity does not depend on the outer signs.

For $t\in[m]$, let
\[
 \mathcal F_t=\left\{(\sigma(\ip v{x_i}-\beta))_{i=1}^m:
 \|v\|_2\le W,\ |\beta|\le B,\ \#\{i:\ip v{x_i}>\beta\}\le t\right\},
\]
and set $P(t)=\sup_{f\in\mathcal F_t}|\ip\zeta f|$ and
$Q=\max_{t\in[m]}P(t)/\sqrt t$. Each $f\in\mathcal F_t$ has entries in $[0,M]$
and satisfies $\|f\|_2\le M\sqrt t$.

\begin{lemma}[Sample relaxation and budget]
\label{lem:budget}
For every $S\in\mathcal X_0^m$,
\begin{equation}
\label{eq:budget}
 \Rad_{\Hc}(S)\le\sqrt{\frac{sk}{m}}\,\E_\zeta Q.
\end{equation}
\end{lemma}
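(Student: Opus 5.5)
Fix the sample $S\in\mathcal X_0^m$ and a sign vector $\zeta$. We bound $\sup_{h}\sum_i\zeta_ih(x_i)$ pointwise in $\zeta$ and only then take expectations, so no activation pattern is fixed in advance. Using the reduced representation \eqref{eq:reduced}, write any $h\in\Hc$ as $\sum_j\varepsilon_j\sigma(\ip{v_j}{x}-\beta_j)$ with $\|v_j\|_2\le W$ and $|\beta_j|\le B$. The sparsity constraint holds on all of $\mathcal X_0$, so in particular at each sample point. Let $f_j=(\sigma(\ip{v_j}{x_i}-\beta_j))_{i=1}^m$ and $I_j=\{i:\ip{v_j}{x_i}>\beta_j\}$, so that $f_j$ vanishes off $I_j$. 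By the triangle inequality and $|\varepsilon_j|=1$,
\[
 \sum_i\zeta_ih(x_i)=\sum_j\varepsilon_j\ip\zeta{f_j}\le\sum_{j:\,I_j\ne\emptyset}|\ip\zeta{f_j}|.
\]
Units with $I_j=\emptyset$ have $f_j=0$ and drop out.

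For each $j$ with $t_j=|I_j|\ge1$, we have $t_j\in[m]$ and $f_j\in\mathcal F_{t_j}$. Hence $|\ip\zeta{f_j}|\le P(t_j)\le Q\sqrt{t_j}$ by the definition of $Q$. Summing gives the bound $Q\sum_j\sqrt{|I_j|}$. For the budget, double counting gives $\sum_j|I_j|=\sum_i\#\{j:\ip{v_j}{x_i}>\beta_j\}\le km$, because each sample point activates at most $k$ units. There are at most $s$ terms, so Cauchy–Schwarz yields $\sum_j\sqrt{|I_j|}\le\sqrt{s\sum_j|I_j|}\le\sqrt{skm}$.

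Combining, $\sup_{h}\sum_i\zeta_ih(x_i)\le Q(\zeta)\sqrt{skm}$ for every $\zeta$. The right-hand side does not depend on $h$, so it bounds the supremum directly. Taking $\E_\zeta$ and dividing by $m$ gives $\sqrt{sk/m}\,\E_\zeta Q$, which is \eqref{eq:budget}.

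The only delicate point is that the active sets $I_j$ depend on $h$, and hence implicitly on $\zeta$ through the supremum. This is handled by bounding by $Q$ before taking the expectation: $Q$ is a maximum over all counts $t$ at once, and the counts are only used through the deterministic budget inequality. Measurability of $Q$ is a minor issue. If needed, one can note that $P(t)$ is a supremum of continuous functions of finitely many $\zeta$ values, and $\zeta$ takes finitely many values, so the expectation is a finite average.
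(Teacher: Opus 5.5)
Your proposal is correct and follows the paper's proof essentially step for step: it bounds each unit's signed contribution by $|\ip\zeta{f_j}|\le Q\sqrt{t_j}$, then combines the double-counting budget $\sum_j t_j\le km$ with Cauchy--Schwarz for each fixed $\zeta$ before averaging. The only cosmetic difference is that the paper phrases the first step as enlarging the class to sparsity on $S$ and maximizing over the outer signs, whereas you apply the triangle inequality directly.
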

\begin{proof}
Enlarge the reduced parameter set by requiring sparsity only on $S$. For fixed parameters,
maximizing over the outer signs yields
$\sum_j|N_j|$, where $N_j=\sum_i\zeta_i\sigma(\ip{v_j}{x_i}-\beta_j)$.
Let $t_j=\#\{i:\ip{v_j}{x_i}>\beta_j\}$. A unit with $t_j=0$ contributes zero;
otherwise $|N_j|\le Q\sqrt{t_j}$. Double-counting active sample-unit pairs gives
$\sum_jt_j\le km$. Therefore, for every sign vector and admissible configuration,
\[
 \sum_j|N_j|\le Q\sum_j\sqrt{t_j}\le Q\sqrt{s\sum_jt_j}\le Q\sqrt{skm}.
\]
Take the supremum, divide by $m$, and average over the signs.
\end{proof}

\subsection{Covering sparse outputs}

The affine evaluation class
$\mathcal A=\{(\ip v{x_i}-\beta)_i:\|v\|_2\le W,\ |\beta|\le B\}$ obeys
\begin{equation}
\label{eq:affine}
 \log N(\mathcal A,\delta,\|\cdot\|_\infty)
 \le80\frac{A^2\log(2m)}{\delta^2}+\log(1+4B/\delta),\qquad\delta>0.
\end{equation}
For completeness, Lemma~\ref{lem:affine} proves this estimate using a Gaussian packing
argument. Restricting to $\spn\{x_i\}$ deals with the fact that
$\max_i|\ip v{x_i}|$ need only be a seminorm on the original space.

\begin{lemma}[Shifted cover]
\label{lem:cover}
For $t\in[m]$ and $\epsilon>0$,
\begin{equation}
\label{eq:sparse-cover}
 \log N(\mathcal F_t,\epsilon,\|\cdot\|_2)
 \le320\frac{A^2t\log(2m)}{\epsilon^2}
      +\log(1+8B\sqrt t/\epsilon).
\end{equation}
\end{lemma}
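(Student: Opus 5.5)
Fix $t$ and $\epsilon$. Set $\delta=\epsilon/(2\sqrt t)$ and take a minimal $\delta$-cover $\{\widehat g_1,\ldots,\widehat g_N\}$ of $\mathcal A$ in $\|\cdot\|_\infty$. By \eqref{eq:affine},
\[
 \log N\le 80\frac{A^2\log(2m)}{\delta^2}+\log(1+4B/\delta)
 =320\frac{A^2t\log(2m)}{\epsilon^2}+\log(1+8B\sqrt t/\epsilon),
\]
which is exactly the right-hand side of \eqref{eq:sparse-cover}. So it suffices to turn this affine cover into an $\epsilon$-cover of $\mathcal F_t$ with no more elements. The centers will be $\widehat f_r=\sigma(\widehat g_r-\delta)$, with $\sigma$ applied coordinatewise. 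Centers may lie outside $\mathcal F_t$, which the ambient-center convention allows.

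Fix $f=\sigma(g)\in\mathcal F_t$, where $g=(\ip v{x_i}-\beta)_i\in\mathcal A$. Choose $r$ with $\|g-\widehat g_r\|_\infty\le\delta$ and write $\widehat f=\widehat f_r$. Compare coordinates.

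\emph{Inactive coordinate} ($g_i\le0$). Then $\widehat g_{r,i}-\delta\le g_i\le0$, so $\widehat f_i=0=f_i$.

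\emph{Active coordinate} ($g_i>0$). The map $z\mapsto\sigma(z)$ is $1$-Lipschitz, so
\[
 |f_i-\widehat f_i|\le|g_i-\widehat g_{r,i}+\delta|\le2\delta .
\]

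At most $t$ coordinates are active, so $\|f-\widehat f\|_2\le2\delta\sqrt t=\epsilon$. Hence the $N$ shifted centers form an $\epsilon$-cover of $\mathcal F_t$ in $\|\cdot\|_2$, which gives \eqref{eq:sparse-cover}.

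The only delicate step is the inactive case. A naive cover $\sigma(\widehat g)$ could place error $\delta$ on all $m$ coordinates, giving $\delta\sqrt m$ rather than $\delta\sqrt t$. The downward shift by $\delta$ is what keeps the support inside $f$'s support. Everything else is the substitution $\delta=\epsilon/(2\sqrt t)$ into \eqref{eq:affine}, which Lemma~\ref{lem:affine} supplies and which I take as given.
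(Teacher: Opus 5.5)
Your proposal is correct and is essentially the paper's proof: the same choice $\delta=\epsilon/(2\sqrt t)$, the same downward-shifted centers $\sigma(\widehat g-\delta)$ that vanish on inactive coordinates and are within $2\delta$ on the at most $t$ active ones, and the same substitution into \eqref{eq:affine}. Your remark that the centers may lie outside $\mathcal F_t$ also matches the paper's use of external covers.
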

\begin{proof}
Put $\delta=\epsilon/(2\sqrt t)$ and cover $\mathcal A$ in sup norm at radius $\delta$.
For $f=\sigma(g)\in\mathcal F_t$, choose a center $\widehat g$ with
$\|g-\widehat g\|_\infty\le\delta$ and use $\widehat f=\sigma(\widehat g-\delta)$.
If $g_i\le0$, then $\widehat g_i-\delta\le0$, so $f_i=\widehat f_i=0$.
On the other coordinates, the $1$-Lipschitz property of ReLU gives
$|f_i-\widehat f_i|\le2\delta$. Thus
$\|f-\widehat f\|_2\le2\delta\sqrt t=\epsilon$. The transformed cover has no more
centers than the affine cover; substituting $\delta$ in \eqref{eq:affine} proves the claim.
The centers need not satisfy the bias constraint, which is why we use external covers.
\end{proof}

\subsection{One chain over all normalized scales}

Define the symmetric set
\[
 T=\bigcup_{t=1}^m\left(t^{-1/2}\mathcal F_t\ \cup\ -t^{-1/2}\mathcal F_t\right).
\]
It contains $0$, has radius at most $M$, and satisfies
$Q=\sup_{z\in T}\ip\zeta z$. By scaling Lemma~\ref{lem:cover} and taking the union of
$2m$ covers, we get the uniform estimate
\begin{equation}
\label{eq:normalized-cover}
 \log N(T,\epsilon,\|\cdot\|_2)
 \le\ellm+320\frac{A^2\ellm}{\epsilon^2}+\log(1+8B/\epsilon).
\end{equation}
The entropy cost of selecting an activation count is only $\log(2m)$, separate from the
$\epsilon^{-2}$ term. No comparison of the expectations of different $P(t)$ is needed.

\begin{lemma}[Finite Rademacher chaining]
\label{lem:chain}
Let $T\subseteq\R^m$ have radius at most $D>0$ and put $r_j=D2^{-j}$.
For an integer $J\ge1$, suppose external $r_j$-nets exist with cardinalities $N_j$ and
$\log N_j\le H_j$, where $0\le H_1\le\cdots\le H_J$. Then
\begin{equation}
\label{eq:finite-chain}
 \E\sup_{z\in T}\ip\zeta z
 \le\sqrt m\,r_J+6\sum_{j=1}^Jr_j\sqrt{H_j}.
\end{equation}
\end{lemma}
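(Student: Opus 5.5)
The argument is the standard finite chaining scheme with a telescoping decomposition. The nets are external, so the chain links need not lie in $T$; this costs only a constant. For each $j\in[J]$, fix an $r_j$-net $\mathcal N_j\subseteq\R^m$ of $T$ with $|\mathcal N_j|=N_j$. For $z\in T$, let $\pi_j(z)\in\mathcal N_j$ be a nearest center, so $\|z-\pi_j(z)\|_2\le r_j$. Set $\pi_0(z)=0$; this is legitimate because $T$ has radius at most $D=2r_1\cdot\frac12\cdot 2$, and more precisely $\|z\|_2\le D=r_0$.

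Write
\[
 \ip\zeta z=\ip\zeta{z-\pi_J(z)}+\sum_{j=1}^J\ip\zeta{\pi_j(z)-\pi_{j-1}(z)}.
\]
The residual is bounded deterministically by Cauchy--Schwarz: $|\ip\zeta{z-\pi_J(z)}|\le\|\zeta\|_2 r_J=\sqrt m\,r_J$. This gives the first term of \eqref{eq:finite-chain} without any expectation.

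For each link $j$, the increment $\pi_j(z)-\pi_{j-1}(z)$ ranges over a finite set $L_j$ as $z$ varies.
\begin{itemize}
\item \emph{Size of $L_j$.} Since the increment is determined by the pair $(\pi_{j-1}(z),\pi_j(z))$, we have $|L_j|\le N_{j-1}N_j\le N_j^2$. Here we use the monotonicity $H_{j-1}\le H_j$, which bounds $N_{j-1}\le e^{H_j}$. For $j=1$, $|L_1|\le N_1$.
\item \emph{Length of each increment.} By the triangle inequality, $\|\pi_j(z)-\pi_{j-1}(z)\|_2\le r_j+r_{j-1}=3r_j$. For $j=1$ the bound is $r_1+D=3r_1$.
\end{itemize}
Taking the supremum termwise,
\[
 \E\sup_z\ip\zeta z\le\sqrt m\,r_J+\sum_j\E\max_{y\in L_j}\ip\zeta y .
\]
Each remaining term is handled by the finite-class (Massart) maximal inequality for Rademacher sums. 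A vector $y$ gives a $\|y\|_2$-subgaussian variable $\ip\zeta y$, so
\[
 \E\max_{y\in L}\ip\zeta y\le\max_{y\in L}\|y\|_2\sqrt{2\log|L|}.
\]
Applied to $L_j$, this yields $3r_j\sqrt{2\cdot 2H_j}=6r_j\sqrt{H_j}$, which is exactly the constant in \eqref{eq:finite-chain}.

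\textbf{Edge cases.}
\begin{itemize}
\item If $H_j=0$, then $N_j=1$. Monotonicity then gives $N_{j-1}=1$ as well (for $j\ge2$), so $L_j$ is a single vector $y$ and $\E\ip\zeta y=0$. For $j=1$, $L_1$ is a single vector since $\pi_0\equiv0$. Either way the link contributes $0\le6r_j\sqrt{H_j}$.
\item If $|L|=1$ in general, the expectation is zero, so the Massart bound is not needed.
\end{itemize}

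The main point to check is that no expectation–supremum interchange is hidden in the argument. The decomposition holds pointwise for every $\zeta$ and every $z$. Bounding the supremum of the sum by the sum of suprema is valid pointwise, and only then is the expectation taken. The only real bookkeeping is the $j=1$ link and the counting $|L_j|\le e^{2H_j}$, both of which are settled above.
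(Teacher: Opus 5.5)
Your proposal is correct and follows the paper's proof essentially step for step. Both arguments telescope from $\pi_0=0$, use edge sets of size at most $N_jN_{j-1}$ with norms at most $3r_j$, apply the finite exponential-moment (Massart) maximal inequality to get $6r_j\sqrt{H_j}$, and bound the residual deterministically by $\sqrt m\,r_J$. One small correction: $N_{j-1}N_j\le N_j^2$ need not hold, since the nets are arbitrary and not nested, but the bound you actually use, $N_{j-1}N_j\le e^{H_{j-1}+H_j}\le e^{2H_j}$, is exactly the paper's.
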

\begin{proof}
Choose deterministic net maps $\pi_j$ with $\|z-\pi_jz\|_2\le r_j$ and set
$\pi_0z=0$, $N_0=1$. The edge set
$E_j=\{\pi_jz-\pi_{j-1}z:z\in T\}$ has at most $N_jN_{j-1}$ elements, each with
norm at most $r_j+r_{j-1}=3r_j$. For every deterministic vector $a$,
\[
 \E e^{\lambda\ip\zeta a}=\prod_i\cosh(\lambda a_i)
 \le e^{\lambda^2\|a\|_2^2/2}.
\]
The exponential-moment maximal inequality therefore gives
\[
 \E\max_{a\in E_j}\ip\zeta a\le3r_j\sqrt{2\log(N_jN_{j-1})}
 \le6r_j\sqrt{H_j}.
\]
If the edge set is a singleton, its expected maximum is zero,
so the same inequality applies. Telescope from $0$ to $\pi_Jz$ and use
$\ip\zeta{z-\pi_Jz}\le\sqrt m\,r_J$ for the residual. Taking the supremum and
expectation proves \eqref{eq:finite-chain}. The process is defined at every point of
$\R^m$, so external centers cause no change to the argument.
\end{proof}

\begin{proposition}[Unseparated bound]
\label{prop:base}
There is a universal constant $C_0\ge1$ such that
\[
 \E Q\le C_0M\ellm^{3/2},\qquad
 \Rad_{\Hc}(S)\le C_0M\sqrt{sk/m}\ellm^{3/2}.
\]
\end{proposition}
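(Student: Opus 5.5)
We apply Lemma~\ref{lem:chain} to the symmetric set $T$ with $D=M$ and $r_j=M2^{-j}$. For the entropies we use the normalized cover estimate \eqref{eq:normalized-cover}, and then transfer the bound to the network class through Lemma~\ref{lem:budget}. The second claim is immediate once $\E Q\le C_0M\ellm^{3/2}$ is known, because \eqref{eq:budget} multiplies $\E Q$ by $\sqrt{sk/m}$. If $M=0$, every element of $T$ is zero and both bounds are trivial, so assume $M>0$.

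\textbf{Entropy at each scale.} With $\epsilon=r_j=M2^{-j}$, set
\[
 H_j=\ellm+320\,\frac{A^2\ellm\,4^j}{M^2}+\log\bigl(1+8B2^j/M\bigr).
\]
This is nondecreasing in $j$ and nonnegative, as Lemma~\ref{lem:chain} requires. Since $A\le M$ and $B\le M$, we get $H_j\le \ellm+320\,\ellm 4^j+\log(1+8\cdot2^j)$. Using $\sqrt{a+b+c}\le\sqrt a+\sqrt b+\sqrt c$, the three parts of each chain term are bounded as follows:
\begin{itemize}
\item $r_j\sqrt{\ellm}\le M2^{-j}\sqrt{\ellm}$;
\item $r_j\sqrt{320\,\ellm}\,2^j= \sqrt{320}\,M\sqrt{\ellm}$;
\item $r_j\sqrt{\log(1+8\cdot 2^j)}\le M2^{-j}\sqrt{j+4}$, up to constants.
\end{itemize}
Summing over $j\le J$, the first and third parts contribute $O(M\sqrt{\ellm})$, because their series converge. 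The middle part contributes $O(MJ\sqrt{\ellm})$.

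\textbf{Choice of depth.} Take $J=\lceil \tfrac12\log_2 m\rceil+1$, so that $r_J\le M/\sqrt m$ and the residual term $\sqrt m\,r_J\le M$ is acceptable. If $m=1$, take $J=1$. Then $J=O(\ellm)$, since $\ellm=\log(2m)\ge\log 2$ absorbs the additive constants. Combining the pieces gives
\[
 \E Q\le M+6\bigl(O(M\sqrt{\ellm})+O(M\ellm^{3/2})\bigr)\le C_0 M\ellm^{3/2},
\]
where we use $\ellm\ge\log2$ to absorb $M$ and $M\sqrt{\ellm}$ into a universal constant times $M\ellm^{3/2}$.

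\textbf{Main obstacle.} The main point is bookkeeping: every constant must be universal, and in particular independent of $B/A$. The logarithmic bias entropy is harmless only because $\epsilon=M2^{-j}$ is measured relative to $M\ge B$, which makes $8B/\epsilon\le 8\cdot2^j$. The rest is a direct substitution into Lemma~\ref{lem:chain}.
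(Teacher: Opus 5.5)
Your proposal is correct and follows the paper's proof. You apply Lemma~\ref{lem:chain} to $T$ with $D=M$, take the entropies from \eqref{eq:normalized-cover} (where $M\ge B$ gives $\log(1+8B2^j/M)\le(j+4)\log 2$, the paper's bound), choose a depth $J$ of order $\log_2\sqrt m$ so the residual is at most $M$ and $J=O(\ellm)$, and then transfer through Lemma~\ref{lem:budget}; your $J$ is one level deeper than the paper's $\max\{1,\lceil\log_2\sqrt m\rceil\}$, which only affects the constant.
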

\begin{proof}
If $M=0$, both quantities vanish. Otherwise use Lemma~\ref{lem:chain} with $D=M$ and
$J=\max\{1,\lceil\log_2\sqrt m\rceil\}$. Then $\sqrt m\,r_J\le M$ and
$J\le\ellm/\log2$. Since $B\le M$, \eqref{eq:normalized-cover} permits
\[
 H_j=\ellm+320A^2\ellm/r_j^2+(j+4)\log2.
\]
Subadditivity of the square root, $\sum_{j\ge1}2^{-j}=1$, and
$\sum_{j\ge1}2^{-j}\sqrt{j+4}\le\sum_{j\ge1}2^{-j}(j+4)=6$ give
\begin{align*}
 \E Q
 &\le M+6M\sqrt{\ellm}+6\sqrt{320}\,A\sqrt{\ellm}\,J
                 +36M\sqrt{\log2}\\
 &\le C_0M\ellm^{3/2}.
\end{align*}
For example, one may take
\[
 C_0=\frac{6\sqrt{320}+6}{\log2}
       +\frac{1+36\sqrt{\log2}}{(\log2)^{3/2}}.
\]
The comparison uses only $A\le M$ and $\ellm\ge\log2$, including $m=1$.
Lemma~\ref{lem:budget} yields the network bound.
\end{proof}

\subsection{Large biases contribute constants}

\begin{lemma}[Bias clipping]
\label{lem:bias}
Put $b_0=\min\{B,A\}$ and $d=(B-A)_+$. Every $h\in\Hc$ can be written on
$\mathcal X_0$ as $h=h_0+c$, where
\[
 h_0\in\mathcal H_{n,s,k}^{W,b_0}(\mathcal X_0),\qquad |c|\le kd.
\]
\end{lemma}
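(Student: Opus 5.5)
Start from the reduced representation \eqref{eq:reduced}: write $h(x)=\sum_j\varepsilon_j\sigma(\ip{v_j}{x}-\beta_j)$ with $\|v_j\|_2\le W$, $|\beta_j|\le B$, and at most $k$ active units at every $x\in\mathcal X_0$. Clip each threshold to $\beta_j^\circ=\max\{-A,\min\{\beta_j,A\}\}$, so that $|\beta_j^\circ|\le b_0$; when $B\le A$ nothing changes and $c=0$. Set $h_0(x)=\sum_j\varepsilon_j\sigma(\ip{v_j}{x}-\beta_j^\circ)$ and $c=h-h_0$. The aim is to show that $c$ is a constant function on $\mathcal X_0$ with $|c|\le kd$, and that $h_0$ still satisfies the $k$-sparsity promise. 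The second half of the definition then follows through the converse realization $u_j=\varepsilon_j$, $w_j=v_j$, $b_j=\beta_j^\circ$, which respects the product bounds $W$ and $b_0$.

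The key step is a pointwise identity for one unit. For $\|x\|_2\le R$ we have $|\ip{v}{x}|\le A$, and we split into three cases on $\beta$.
\begin{itemize}
\item If $|\beta|\le A$, then $\beta^\circ=\beta$ and nothing changes.
\item If $\beta>A$, then $\ip vx-\beta<0$ and $\ip vx-A\le0$. Both ReLU outputs vanish, so the unit is inactive before and after clipping.
\item If $\beta<-A$, then $\ip vx-\beta>0$ and $\ip vx+A\ge0$. Hence
\[
\sigma(\ip vx-\beta)-\sigma(\ip vx+A)=-\beta-A=(-\beta-A)_+\in(0,d],
\]
a constant independent of $x$. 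The bound $\le d$ holds because $|\beta|\le B$.
\end{itemize}
This is exactly the identity displayed in the technique overview. It gives $c=\sum_{j\in J}\varepsilon_j(-\beta_j-A)$, a constant on $\mathcal X_0$, where $J=\{j:\beta_j<-A\}$.

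Next come sparsity and the count of corrections. Clipping never creates an activation. In the case $\beta>A$ the clipped unit satisfies $\ip vx-A\le 0$, so it is inactive (the definition counts strict positivity). In the case $\beta<-A$ the unit was already active, since $\ip vx>\beta$. So the active set of each $x$ under $h_0$ is contained in its active set under $h$, and $h_0$ is $k$-sparse on $\mathcal X_0$.

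For $|c|\le kd$: every $j\in J$ has $\ip{v_j}{x}>\beta_j$ for all $x$ in the ball, so every such unit is active at any fixed $x\in\mathcal X_0$. Since $\mathcal X_0$ is nonempty, pick one point $x$; then $|J|\le k$. Each term of $c$ has absolute value at most $d$, so $|c|\le kd$.

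The only delicate points are bookkeeping. First, the case $\beta\le -A$ at the boundary $\beta=-A$ needs no correction, since $(-\beta-A)_+=0$. Second, inactivity in the case $\beta>A$ must be checked with the non-strict inequality. Third, the constant bound genuinely uses that $\mathcal X_0$ is nonempty.
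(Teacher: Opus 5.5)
Your proposal is correct and follows the paper's proof essentially step for step. Both clip each threshold to $[-A,A]$ and verify the three-case identity $\sigma(\ip{v}{x}-\beta)=\sigma(\ip{v}{x}-\beta^\circ)+(-\beta-A)_+$ on the ball. Both then note that clipping creates no activations, and bound the number of nonzero constant corrections by $k$, since each comes from a unit active at every point of the nonempty set $\mathcal X_0$. Your explicit check that the realization $u_j=\varepsilon_j$, $w_j=v_j$, $b_j=\beta_j^\circ$ meets the bias bound $b_0$ spells out a step the paper leaves implicit.
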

\begin{proof}
Use \eqref{eq:reduced} and clip each $\beta_j$ to $[-A,A]$.
For $a\in[-A,A]$ and every $\beta\in\R$, the three cases
$\beta<-A$, $-A\le\beta\le A$, and $\beta>A$ give
\begin{equation}
\label{eq:bias-identity}
 \sigma(a-\beta)=\sigma(a-\beta^\circ)+(-\beta-A)_+,
 \qquad \beta^\circ=\max\{-A,\min\{\beta,A\}\}.
\end{equation}
Clipping creates no active unit at any input in the ball. A unit with $\beta<-A$ was
strictly active everywhere before clipping, and a unit with $\beta>A$ stays inactive.
Hence $h_0=\sum_j\varepsilon_j\sigma(\ip{v_j}x-\beta_j^\circ)$ belongs to the stated
class. Every nonzero summand of
$c=\sum_j\varepsilon_j(-\beta_j-A)_+$ comes from a unit active everywhere.
Because $\mathcal X_0$ is nonempty, there can be at most $k$ such units, each contributing
at most $d$ in absolute value.
\end{proof}

\subsection{Proof of Theorem~\ref{thm:upper}}
\label{sec:proof-upper}

\begin{proof}
The complexity of constants in $[-kd,kd]$ is $kd\rhoM$. Subadditivity and
Lemma~\ref{lem:bias} give the slightly sharper intermediate bound
\begin{equation}
\label{eq:refined-upper}
 \Rad_{\Hc}(S)
 \le (A+b_0)\min\{k,C_0\sqrt{sk/m}\ellm^{3/2}\}+kd\rhoM.
\end{equation}
Here the two estimates for the clipped class are its pointwise envelope and
Proposition~\ref{prop:base}. Using $A+b_0\le2A$, $d\le B$, and $C_0\ge1$ proves
\eqref{eq:headline} with $C=2C_0$. This also covers $A=0$: the clipped class is then zero
and the original class consists exactly of constants in $[-kB,kB]$.

The original envelope is $|h(x)|\le kM$, proving the first additional bound.
For the second, ignore sparsity and bound each signed neuron separately. Since the
unsigned neuron class contains zero, sign symmetry and scalar contraction imply
\[
 \E\sup_{\|v\|\le W,|\beta|\le B}
 \left|\sum_i\zeta_i\sigma(\ip v{x_i}-\beta)\right|
 \le2\left(W\E\left\|\sum_i\zeta_ix_i\right\|_2+B\E\left|\sum_i\zeta_i\right|\right)
 \le2M\sqrt m.
\]
Summing over $s$ neurons proves $2sM/\sqrt m$.
\end{proof}

\section{Lower bounds on the same class}
\label{sec:lower}

The lower bound uses parallel neurons within each cluster and orthogonal directions between
clusters. Parallelism allows $k$ units to each attain value $A$ on a single input while respecting
the radius constraint. Orthogonality makes the output values of different clusters independent
parameters on the constructed support.

\begin{lemma}[A moment estimate]
\label{lem:moment}
If $Z$ has finite fourth moment and $\E Z^2>0$, then
\[
 \E|Z|\ge\frac{(\E Z^2)^{3/2}}{(\E Z^4)^{1/2}}.
\]
In particular, $\E|\sum_{i=1}^m\zeta_i|\ge\sqrt{m/3}$.
\end{lemma}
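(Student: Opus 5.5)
The plan is to derive the general inequality from Hölder's inequality and then specialize it to Rademacher sums. For the general bound, I would write $Z^2=|Z|^{1/2}\cdot|Z|^{3/2}$. Applying Cauchy--Schwarz (equivalently, Hölder with exponents $2,2$) gives
\[
 \E Z^2=\E\bigl(|Z|^{1/2}|Z|^{3/2}\bigr)\le(\E|Z|)^{1/2}(\E|Z|^3)^{1/2}.
\]
A second Cauchy--Schwarz step, writing $|Z|^3=|Z|\cdot Z^2$, yields $\E|Z|^3\le(\E Z^2)^{1/2}(\E Z^4)^{1/2}$. Combining the two, we get $(\E Z^2)^2\le\E|Z|\,(\E Z^2)^{1/2}(\E Z^4)^{1/2}$. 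Dividing by the positive quantity $(\E Z^2)^{1/2}(\E Z^4)^{1/2}$ gives the claim; note that $\E Z^4>0$ follows from $\E Z^2>0$. Finiteness of the fourth moment ensures that every term is finite.

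Alternatively, I could write $Z^2=|Z|^{2/3}|Z|^{4/3}$ and apply Hölder with exponents $3/2$ and $3$. This gives $\E Z^2\le(\E|Z|)^{2/3}(\E Z^4)^{1/3}$ directly, and cubing produces the same inequality. I would use this one-line version.

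For the specialization, take $Z=\sum_{i=1}^m\zeta_i$. By independence and $\zeta_i^2=1$, we have $\E Z^2=m$. Expanding $\E Z^4$, only the terms $\zeta_i^4$ and $\zeta_i^2\zeta_j^2$ with $i\ne j$ survive. This gives
\[
 \E Z^4=m+3m(m-1)=3m^2-2m\le3m^2.
\]
Then $\E|Z|\ge m^{3/2}/(\sqrt3\,m)=\sqrt{m/3}$. Dividing by $m$ also gives the lower bound $\rhoM\ge1/\sqrt{3m}$ quoted earlier. The companion upper bound $\rhoM\le1/\sqrt m$ is just Jensen's inequality, $\E|Z|\le(\E Z^2)^{1/2}$.

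There is no real obstacle here. The only points needing care are the counting in the fourth moment (the $3m(m-1)$ comes from $\binom{4}{2}=6$ pairings divided by the $2$ orderings of the index pair) and checking that the division is legitimate.
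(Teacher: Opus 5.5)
Your proof is correct and follows the paper's own argument: Hölder with exponents $3/2$ and $3$ applied to $|Z|^{2/3}|Z|^{4/3}$, then $\E Z^2=m$ and $\E Z^4=3m^2-2m\le3m^2$ for the sign sum. Your two-step Cauchy--Schwarz variant is also valid, and the Jensen remark correctly supplies the companion bound $\rhoM\le1/\sqrt m$.
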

\begin{proof}
H\"older applied to $|Z|^2=|Z|^{2/3}|Z|^{4/3}$ gives
$\E Z^2\le(\E|Z|)^{2/3}(\E|Z|^4)^{1/3}$. For the sign sum,
$\E Z^2=m$ and $\E Z^4=3m^2-2m\le3m^2$.
\end{proof}

\subsection{Proof of Theorem~\ref{thm:lower}}
\label{sec:proof-lower}

\begin{proof}
First suppose $A>0$. Choose orthonormal vectors $e_1,\ldots,e_q\in\R^n$ and let
$D_x$ be uniform on $\mathcal X_0=\{Re_1,\ldots,Re_q\}$. Use $q$ clusters of $k$
neurons each, with weight $We_b$ and zero bias in cluster $b$. There are at most $s$
neurons; pad with inactive zero units when necessary. On $Re_b$, only cluster $b$ is
active. Choosing a common outer sign $\theta_b\in\{-1,1\}$ in that cluster realizes
$h_\theta(Re_b)=kA\theta_b$.

Write $X_i=Re_{J_i}$, where the $J_i$ are i.i.d. uniform on $[q]$ and independent of
the Rademacher signs. For each realized sample, optimizing over the cluster signs gives
\begin{equation}
\label{eq:cluster-rad}
 \Rad_{\Hc}(S)\ge\frac{kA}{m}\sum_{b=1}^q
 \E_\zeta\left|\sum_{i=1}^m\zeta_i\ind\{J_i=b\}\right|.
\end{equation}
For a fixed $b$, set $Z_b=\sum_i\zeta_i\ind\{J_i=b\}$ and $\lambda=m/q\ge1$.
Independence and centering give
\[
 \E Z_b^2=\lambda,\qquad
 \E Z_b^4=\frac mq+\frac{3m(m-1)}{q^2}\le\lambda+3\lambda^2\le4\lambda^2.
\]
Lemma~\ref{lem:moment}, followed by \eqref{eq:cluster-rad}, yields
\begin{equation}
\label{eq:weight-lower}
 \E_S\Rad_{\Hc}(S)\ge\frac{kA}{2}\sqrt{q/m}
 \ge\frac{A}{2\sqrt2}\min\{k,\sqrt{sk/m}\}.
\end{equation}
For the last step, $\lfloor s/k\rfloor\ge s/(2k)$ since $s/k\ge1$, and hence
$q\ge\tfrac12\min\{s/k,m\}$.

The same class on the same support also contains the two constants $\pm kB$: use $k$
zero-weight units with bias $-B$ and a common outer sign. Thus for every sample,
\begin{equation}
\label{eq:bias-lower}
 \Rad_{\Hc}(S)\ge kB\rhoM\ge\frac{kB}{\sqrt{3m}}.
\end{equation}
The two constructions are alternative subclasses; they need not fit simultaneously into one
network. Taking the larger of \eqref{eq:weight-lower} and \eqref{eq:bias-lower} and using
$\max\{a,b\}\ge(a+b)/2$ proves \eqref{eq:lower}. If $A=0$, take
$\mathcal X_0=\{0\}$ and use only the constant subclass; the weight term is zero.
Finally, a supremum over samples is at least their expectation under this marginal.
\end{proof}

\paragraph{Interpretation of the regimes.}
When $m\le s/k$, we may use one cluster per sample-sized support point; random repetitions
only change constants in the lower bound. When $m\ge s/k$, the $\lfloor s/k\rfloor$
clusters are repeatedly observed and their signed sums have square-root fluctuations.
The bias subclass has just one freely chosen sign, giving $m^{-1/2}$ decay independently of
the number of available clusters. This explains the distinct width dependence in the upper
bound. The dimensional condition is used to supply orthogonal cluster directions; it is not
an assumption of Theorem~\ref{thm:upper}, and no claim of sharpness for every smaller fixed
dimension is made.

\section{When does sparsity remove the width dependence?}
\label{sec:geometry}

The lower bound in Section~\ref{sec:lower} allows different neuron clusters to act independently
on a discrete input set. Does its width dependence survive if sparsity is required throughout a
full-dimensional convex domain? The answer depends on whether activation thresholds are available.
Write $\Ball=\{x\in\R^n:\|x\|_2\le R\}$ and suppose $R>0$ throughout this section.

\begin{proposition}[Zero-threshold width collapse]
\label{prop:zero-ball}
Every zero-bias network that is $k$-sparse on $\Ball$ has a reduced representation with at most
$\min\{s,2k\}$ nonzero units. Consequently, for every $S\in(\Ball)^m$,
\[
 \Rad_{\mathcal H_{n,s,k}^{W,0}(\Ball)}(S)\le\frac{4kA}{\sqrt m}.
\]
The supremum over such samples is at least $kA\rhoM$, so its order is $kA/\sqrt m$.
\end{proposition}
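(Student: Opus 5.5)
We will work throughout with the reduced representation \eqref{eq:reduced} with $\beta_j=0$. Each unit then has the form $\varepsilon_j\sigma(\ip{v_j}{x})$ with $\|v_j\|_2\le W$, and a unit is nonzero exactly when $v_j\ne0$.

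\textbf{Counting nonzero units.} The key step is an antipodal-pair argument. For any $x$ with $\|x\|_2=R$, both $x$ and $-x$ lie in $\Ball$. Every unit with $\ip{v_j}{x}\ne0$ is strictly active at exactly one of $x$ and $-x$. By $k$-sparsity at each point, at most $2k$ units have $\ip{v_j}{x}\ne0$.

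Now choose $x$ on the sphere of radius $R$ that avoids the finitely many hyperplanes $v_j^\perp$ with $v_j\ne0$. Such an $x$ exists because a finite union of proper hyperplanes cannot cover the sphere (it has measure zero). For this generic $x$, every nonzero unit satisfies $\ip{v_j}{x}\ne0$, so there are at most $2k$ nonzero units. Combined with the trivial bound $s$, we get at most $\min\{s,2k\}$ nonzero units.

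\textbf{Rademacher upper bound.} Every $h$ in the class is a sum of at most $2k$ signed zero-bias neurons, padded with zero units. So the class is contained in the $2k$-fold sum of the signed single-neuron class. We then apply the neuronwise estimate from the end of the proof of Theorem~\ref{thm:upper}. With $B=0$ it reads
\[
\E\sup_{\|v\|\le W}\Bigl|\sum_i\zeta_i\sigma(\ip v{x_i})\Bigr|\le2W\,\E\Bigl\|\sum_i\zeta_ix_i\Bigr\|_2\le2A\sqrt m .
\]
The signed class is handled by taking absolute values. Subadditivity of the supremum over the $2k$ summands then gives
\[
\Rad(S)\le 2k\cdot 2A\sqrt m/m=4kA/\sqrt m .
\]

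\textbf{Lower bound.} Fix a unit vector $e$ and take all sample points equal to $Re$. Use $k$ parallel units with weight $We$, zero bias, and a common outer sign $\theta$.
\begin{itemize}
\item The class remains $k$-sparse on the whole ball, since there are only $k$ nonzero units in total.
\item The network value at $Re$ is $\theta kA$.
\end{itemize}
Optimizing $\theta$ for each sign vector gives $\Rad(S)\ge kA\,\E|\sum_i\zeta_i|/m=kA\rhoM$. By Lemma~\ref{lem:moment} this is at least $kA/\sqrt{3m}$, which matches the upper bound's order $kA/\sqrt m$.

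\textbf{Main obstacle.} The only delicate point is the genericity argument: we must ensure that a single antipodal pair witnesses every nonzero unit simultaneously. This holds because the excluded set on the sphere is a finite union of great subspheres, which is nowhere dense, so a suitable $x$ exists. The case $n=1$ works the same way, since then the excluded set is just the point $0$, which is not on the sphere.
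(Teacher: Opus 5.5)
Your proposal is correct and follows the paper's proof essentially step for step. A radius-$R$ antipodal pair chosen off every zero hyperplane $\ip{v_j}{x}=0$ counts each nonzero unit exactly once, so there are at most $2k$ of them; the neuronwise estimate with $2k$ summands then gives $4kA/\sqrt m$, and $k$ parallel units at a repeated point $Re$ give the $kA\rho_m$ lower bound. Your explicit treatment of $n=1$ and of the generic point on the sphere only spells out details the paper leaves brief; note that this choice implicitly uses the section's standing assumption $R>0$.
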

\begin{proof}
Use the reduced parameters in \eqref{eq:reduced} and discard $v_j=0$ units.
Choose a unit direction outside the finitely many hyperplanes $\ip{v_j}x=0$.
Each remaining unit is strictly active at exactly one of the two antipodal radius-$R$ points.
Both points lie in the required input domain, so the number of remaining units is at most $2k$.
The neuronwise estimate in Theorem~\ref{thm:upper} then gives the upper bound.
For the lower bound, repeat $Re_1$ and use $k$ parallel units with weight $We_1$ and a common
outer sign. These networks are globally $k$-sparse and take values $\pm kA$ on the sample.
The case $W=0$ is immediate.
\end{proof}

Central symmetry alone is insufficient. On the finite set
$\{\pm Re_b:b\in[q]\}$, the zero-bias clusters from Section~\ref{sec:lower} remain
admissible: most units can vanish on a given antipodal pair. The full-ball argument uses a pair
that avoids every nonzero neuron's zero hyperplane simultaneously.

For a polytope $\mathcal X_0=\operatorname{conv}(V)$ with finitely many vertices, a related
count holds even with biases. Every unit active somewhere is active at a vertex, since an affine
function attains its maximum there. Hence at most $k|V|$ units are nonzero on the domain, giving
the neuronwise bound $2k|V|(A+B)/\sqrt m$. In particular, on an interval ($n=1$), the width
dependence disappears for every bias bound, not only for $B=0$.

Positive thresholds can instead isolate disjoint caps of the ball. Set
\[
 \tau=\frac{\sqrt3}{2},\qquad d_q=\min\{q,\lceil16\log(2q)\rceil\}.
\]
The following elementary packing suffices; we do not need sharp spherical-code bounds.

\begin{lemma}[Separated directions]
\label{lem:directions}
If $n\ge d_q$, there are unit vectors $u_1,\ldots,u_q\in\R^n$ with
$\ip{u_b}{u_c}\le1/2$ for $b\ne c$.
\end{lemma}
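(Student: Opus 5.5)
The plan is to split into two cases according to which term attains the minimum in $d_q$.

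\emph{Case $d_q=q$.} If $n\ge q$, take $u_b=e_b$, the standard orthonormal basis vectors. Then $\ip{u_b}{u_c}=0\le1/2$ for $b\ne c$, and nothing more is needed. This case also absorbs small $q$ (including $q=1$, where the condition is vacuous).

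\emph{Case $d_q=\lceil16\log(2q)\rceil<q$.} Put $d=d_q\le n$ and work in the first $d$ coordinates. Use the probabilistic method with random sign vectors. Let $\xi^{(1)},\ldots,\xi^{(q)}$ be independent uniform vectors in $\{-1,1\}^d$, and set $u_b=\xi^{(b)}/\sqrt d$, padded with zeros in $\R^n$. Each $u_b$ is a unit vector. For $b\ne c$,
\[
\ip{u_b}{u_c}=\frac1d\sum_{i=1}^d\eta_i,
\]
where $\eta_i=\xi^{(b)}_i\xi^{(c)}_i$ are i.i.d. uniform signs. Hoeffding's inequality (equivalently, the $\cosh$ bound used in Lemma~\ref{lem:chain}) gives
\[
\P\left(\ip{u_b}{u_c}>\tfrac12\right)\le e^{-d/8}.
\]
A union bound over the fewer than $q^2/2$ unordered pairs bounds the failure probability by $\tfrac{q^2}{2}e^{-d/8}$. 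Since $d\ge16\log(2q)$, we have $e^{-d/8}\le(2q)^{-2}$, so the failure probability is at most $1/8<1$. Hence some realization satisfies all pairwise constraints simultaneously.

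The only real checkpoint is the constant: the exponent $d/8$ must beat $2\log q$ plus the union-bound slack. With the factor $16$ there is room to spare, so I do not expect a genuine obstacle. One could even use only one-sided tails, since only the upper bound $\ip{u_b}{u_c}\le1/2$ is required.
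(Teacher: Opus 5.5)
Your proposal is correct and follows essentially the same argument as the paper. The paper also takes orthonormal vectors when $n\ge q$. Otherwise it uses random normalized sign vectors, the sub-Gaussian tail $e^{-n/8}$ and a union bound over pairs. The only cosmetic differences are that you work in the first $d_q$ coordinates rather than all $n$, and that you split cases by which term attains $d_q$ rather than by whether $n\ge q$.
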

\begin{proof}
If $n\ge q$, take orthogonal vectors. Otherwise $n\ge16\log(2q)$.
Take independent vectors uniform on $\{\pm n^{-1/2}\}^n$.
For a fixed pair, their inner product has the law of $n^{-1}\sum_{r=1}^n\xi_r$ for
independent uniform signs. The bound $\E e^{\lambda\sum_r\xi_r}\le e^{n\lambda^2/2}$ gives
$\P[\ip{u_b}{u_c}>1/2]\le e^{-n/8}$.
A union bound over pairs has probability at most $q(q-1)e^{-n/8}/2<1$.
Thus a configuration with the required separation exists.
\end{proof}

\begin{theorem}[A full-ball lower bound in logarithmic dimension]
\label{thm:ball-lower}
Let $q=\min\{\lfloor s/k\rfloor,m\}$ and suppose $n\ge d_q$. Put
$a=\min\{A,B/\tau\}$. There is a fixed marginal $D_x$ on $\Ball$ such that
\begin{equation}
\label{eq:ball-lower}
 \E_{S\sim D_x^m}\Rad_{\Hball}(S)
 \ge c_{\mathrm{cap}}\left(a\min\{k,\sqrt{sk/m}\}+\frac{kB}{\sqrt m}\right),
 \qquad c_{\mathrm{cap}}=\frac{1-\tau}{4\sqrt2}.
\end{equation}
In particular, if $B\ge\tau A$, the rate of Theorem~\ref{thm:lower} holds for the full-ball
class itself, with only $O(\log(2q))$ dimensions sufficient.
\end{theorem}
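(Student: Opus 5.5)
We follow the proof of Theorem~\ref{thm:lower}, replacing orthogonal directions by the separated directions of Lemma~\ref{lem:directions} and zero biases by the threshold $\tau$. The two terms of \eqref{eq:ball-lower} come from two alternative subclasses, as before.

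\textbf{Weight term.} Assume $a>0$ and let $u_1,\ldots,u_q$ be the unit vectors from Lemma~\ref{lem:directions}. Take $D_x$ uniform on the cap centers $\{Ru_1,\ldots,Ru_q\}$. Each cluster $b$ consists of $k$ parallel units with weight $v=(a/R)u_b$, so $\|v\|_2=a/R\le W$. Each unit has bias $\beta=\tau a\le B$, and the whole cluster shares an outer sign $\theta_b$. Pad with zero units to reach $s$.

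The key step is sparsity on the \emph{entire} ball. We claim no $x\in\Ball$ activates two clusters. Suppose $\ip{u_b}{x}>\tau R$ and $\ip{u_c}{x}>\tau R$ with $b\ne c$. Write $x=Ry$ with $\|y\|\le1$. Then $\ip{u_b+u_c}{y}>2\tau=\sqrt3$. But
\[
\|u_b+u_c\|=\sqrt{2+2\ip{u_b}{u_c}}\le\sqrt3,
\]
so Cauchy--Schwarz gives a contradiction. Each point therefore activates at most one cluster, hence at most $k$ units. Since the activation condition is strict, the boundary case causes no trouble.

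At $Ru_b$, cluster $b$ outputs $k(a-\tau a)=(1-\tau)ka$. Every other cluster $c$ has $\ip{u_c}{Ru_b}\le R/2<\tau R$, so it is inactive and outputs $0$. This gives $h_\theta(Ru_b)=(1-\tau)ka\theta_b$.

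\textbf{Occupancy argument.} From here the computation of Theorem~\ref{thm:lower} applies verbatim with amplitude $(1-\tau)ka$ in place of $kA$. The fourth-moment bound of Lemma~\ref{lem:moment} on the occupancy sums $Z_b$, together with $q\ge\tfrac12\min\{s/k,m\}$, gives
\[
\E_S\Rad(S)\ge\frac{(1-\tau)a}{2\sqrt2}\min\{k,\sqrt{sk/m}\}.
\]

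\textbf{Bias term.} The constants $\pm kB$ lie in the full-ball class. Use $k$ zero-weight units with bias $-B$ and a common sign; these are active everywhere but number only $k$, so they are $k$-sparse on the ball. This gives $\Rad(S)\ge kB\rhoM\ge kB/\sqrt{3m}$ for every sample, whatever the marginal. Combining the two terms via $\max\{x,y\}\ge(x+y)/2$ and $1/\sqrt3\ge1-\tau$ yields the constant $c_{\mathrm{cap}}$.

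\textbf{Degenerate cases and the final claim.} If $a=0$, only the bias term is needed and any marginal works. When $B\ge\tau A$ we have $a=A$, which recovers the rate of Theorem~\ref{thm:lower}.

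The main obstacle is the sparsity verification on all of $\Ball$, not just on the support. It rests entirely on choosing $\tau$ so that $2\tau\ge\sqrt{2+2\cdot\tfrac12}$ while keeping $\tau<1$, which leaves a positive output margin $1-\tau$. The rest is a reuse of the earlier computations.
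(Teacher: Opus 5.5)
Your proposal is correct and follows the paper's proof essentially step for step. You use the same uniform marginal on the cap centers $Ru_b$, the same clusters with $v_b=(a/R)u_b$ and $\beta_b=\tau a$, the same Cauchy--Schwarz check via $\|u_b+u_c\|_2^2\le 3=4\tau^2$ for sparsity on all of $\Ball$, the occupancy computation of Theorem~\ref{thm:lower} with amplitude $(1-\tau)a$, and the constant subclass $\pm kB$, combined by taking the larger of the two subclass bounds.
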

\begin{proof}
Choose directions from Lemma~\ref{lem:directions} and put $D_x$ uniform on
$\{Ru_1,\ldots,Ru_q\}$. Suppose first that $a>0$.
In cluster $b$, place $k$ copies of the unit
\[
 v_b=(a/R)u_b,\qquad\beta_b=\tau a,
\]
with a common outer sign $\theta_b$. These parameters obey the norm and bias bounds.
If two distinct clusters were active at an input $x\in\Ball$, then
\[
 \ip{u_b+u_c}x>2\tau R,
 \qquad \|u_b+u_c\|_2^2=2+2\ip{u_b}{u_c}\le3=4\tau^2,
\]
contradicting Cauchy--Schwarz. Thus the network is $k$-sparse at \emph{every} point of the ball,
not only at the sampling locations. At $Ru_b$, its value is $ka(1-\tau)\theta_b$; all other
clusters are inactive because $1/2<\tau$. Pad with inactive units to width $s$.

The moment calculation in \eqref{eq:weight-lower}, with amplitude $a(1-\tau)$ replacing $A$,
now gives a lower bound of
$a(1-\tau)\min\{k,\sqrt{sk/m}\}/(2\sqrt2)$.
The same full-ball class contains the constants $\pm kB$, contributing $kB\rhoM$.
Taking the larger of these two subclass bounds proves \eqref{eq:ball-lower}.
If $a=0$, the constant subclass alone proves the assertion.
\end{proof}

\paragraph{What changes between the regimes?}
The zero-bias proposition turns pointwise sparsity into a global bound on the number of useful
units. The cap construction defeats that implication by assigning different clusters disjoint
activation regions. It also shows that a finite-support marginal does not require a function
class whose sparsity promise is restricted to that finite support.
For $m\ge s/k$, the comparison is summarized in Table~\ref{tab:domains}.
The value $\tau A$ is sufficient for the construction, not a proved critical threshold.

The logarithmic dimension order is necessary for constant-amplitude sign encoding.
Indeed, every full-ball network is $kW$-Lipschitz: along any segment, the finitely many
affine pieces have gradients bounded by $kW$. When $A+B>0$, normalization by $k(A+B)$ gives
a function that is at most $1/R$-Lipschitz. If all signs on $q$ inputs can be realized at amplitude
$\alpha>0$, each pair of inputs must be at distance at least $2\alpha R$.
Disjoint open balls of radius $\alpha R$ around them lie inside the ball of radius $(1+\alpha)R$,
so volume comparison gives $q\le(1+1/\alpha)^n$.
This proves an $\Omega(\log q)$ requirement when $\alpha$ is constant; it is a statement about
this encoding, not a necessary dimension condition for every complexity lower bound.

\begin{table}[t]
\centering
\caption{Worst-case empirical complexity; $g=\lfloor s/k\rfloor$ and $m\ge s/k$.
The lower bounds also hold in expectation for an appropriate i.i.d. marginal.
$\widetilde\Theta$ permits logarithms in $m$; the middle row has none.}
\label{tab:domains}
\begin{tabular}{@{}lll@{}}
\toprule
Sparsity domain and bias & Dimension & Complexity \\
\midrule
Finite set, $B=0$ & $n\ge g$ & $\widetilde\Theta(A\sqrt{sk/m})$ \\
Entire ball, $B=0$ & $n\ge1$ & $\Theta(kA/\sqrt m)$ \\
Entire ball, $B\ge\tau A$ & $n\ge d_g$ & $\widetilde\Theta(A\sqrt{sk/m}+kB/\sqrt m)$ \\
\bottomrule
\end{tabular}
\end{table}

\section{Learning consequences}
\label{sec:learning}

Fix the class $\mathcal H=\Hc$ before drawing data. In this section $\mathcal X_0$ is Borel,
$D$ is a probability distribution on $\mathcal X_0\times\R$, and the loss
$\ell:\R\times\R\to[0,b_\ell]$ is jointly Borel measurable and $L$-Lipschitz in its
first argument. Write $\mathcal L_D(h)=\E\ell(h(X),Y)$ and
$\widehat{\mathcal L}_S(h)=m^{-1}\sum_i\ell(h(X_i),Y_i)$. Put
\begin{equation}
\label{eq:learning-U}
 U_m=\min\left\{k(A+B),\frac{2s(A+B)}{\sqrt m},\;
 C A\min\{k,\sqrt{sk/m}\ellm^{3/2}\}+kB\rhoM\right\}.
\end{equation}
This is a deterministic upper bound on the empirical and expected complexity of the fixed
class, for every admissible marginal.

\begin{corollary}[Agnostic and realizable guarantees]
\label{cor:learning}
For any $\eta>0$, there exists a measurable $\eta$-approximate empirical risk minimizer
$\widehat h\in\mathcal H$. For every $\delta\in(0,1)$, with probability at least
$1-\delta$ over $S\sim D^m$ it satisfies
\begin{equation}
\label{eq:agnostic}
 \mathcal L_D(\widehat h)-\inf_{h\in\mathcal H}\mathcal L_D(h)
 \le4L U_m+2b_\ell\sqrt{\frac{\log(2/\delta)}{2m}}+\eta.
\end{equation}
If $Y=h^\star(X)$ almost surely for some $h^\star\in\mathcal H$ and
$\ell(y,y)=0$, then with probability at least $1-\delta$,
\begin{equation}
\label{eq:realizable}
 \mathcal L_D(\widehat h)
 \le2L U_m+b_\ell\sqrt{\frac{\log(1/\delta)}{2m}}+\eta.
\end{equation}
These statements give statistical guarantees, without an efficient optimization claim.
\end{corollary}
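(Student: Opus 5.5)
The plan has three parts: build a measurable approximate empirical risk minimizer, derive a uniform deviation bound from symmetrization, contraction and the complexity bound $U_m$, and then compare $\widehat h$ with a near-optimal fixed competitor.

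\emph{Measurable approximate ERM.} Use the reduced representation \eqref{eq:reduced}. The admissible parameter set $\Theta$ consists of those $(\varepsilon,v,\beta)$ satisfying the sparsity promise on all of $\mathcal X_0$. Although $\Theta$ may be complicated, it is a subset of a Polish space, so a countable dense subset $\Theta_0\subseteq\Theta$ exists. The empirical loss $\theta\mapsto\widehat{\mathcal L}_S(h_\theta)$ is continuous in $\theta$ for each fixed sample, because ReLU networks are continuous in their parameters and $\ell$ is Lipschitz in its first argument. Hence
\[
 \inf_{\Theta}\widehat{\mathcal L}_S=\inf_{\Theta_0}\widehat{\mathcal L}_S .
\]
Enumerate $\Theta_0=\{\theta_1,\theta_2,\dots\}$ and let $\widehat h$ be $h_{\theta_N}$, where $N$ is the first index whose empirical loss is within $\eta$ of this infimum. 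Each event $\{N=j\}$ is a countable Boolean combination of measurable events, so $\widehat h$ is jointly measurable in $(S,x)$.

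\emph{Uniform deviation.} The map $S\mapsto\sup_{h}(\mathcal L_D(h)-\widehat{\mathcal L}_S(h))$ equals the corresponding supremum over $\Theta_0$, by continuity in the parameter together with dominated convergence for $\mathcal L_D$; this makes it measurable. Changing one sample point changes it by at most $b_\ell/m$. McDiarmid's inequality therefore gives a tail of order $b_\ell\sqrt{\log(1/\delta)/(2m)}$. The expectation is bounded by symmetrization, giving $2\,\E\,\Rad_{\ell\circ\mathcal H}$, and then by Talagrand's contraction lemma applied to the maps $y\mapsto\ell(y,Y_i)$, which are $L$-Lipschitz, giving $2L\,\E\,\Rad_{\mathcal H}(S_x)$. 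Theorem~\ref{thm:upper} then bounds this by $2LU_m$, since every sample lies in $\mathcal X_0^m$ and $U_m$ is the minimum of its three bounds.

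\emph{Agnostic bound.} Fix $h^\circ$ with $\mathcal L_D(h^\circ)\le\inf_h\mathcal L_D(h)+\gamma$. Decompose
\[
 \mathcal L_D(\widehat h)-\mathcal L_D(h^\circ)
 =\bigl[\mathcal L_D(\widehat h)-\widehat{\mathcal L}_S(\widehat h)\bigr]
 +\bigl[\widehat{\mathcal L}_S(\widehat h)-\widehat{\mathcal L}_S(h^\circ)\bigr]
 +\bigl[\widehat{\mathcal L}_S(h^\circ)-\mathcal L_D(h^\circ)\bigr].
\]
The middle term is at most $\eta$. The first term is controlled by the uniform deviation bound. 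The last term concerns a single fixed function, so Hoeffding's inequality handles it. Splitting $\delta$ into two halves and running both deviation bounds two-sided at level $\delta/2$ yields $4LU_m+2b_\ell\sqrt{\log(2/\delta)/(2m)}+\eta+\gamma$. Since the bound holds for every $\gamma>0$, letting $\gamma\to0$ along a sequence and using continuity of probability removes $\gamma$.

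\emph{Realizable bound.} Here $\widehat{\mathcal L}_S(h^\star)=0$, so $\widehat{\mathcal L}_S(\widehat h)\le\eta$. Only the one-sided uniform deviation at level $\delta$ is needed, which gives $2LU_m+b_\ell\sqrt{\log(1/\delta)/(2m)}+\eta$.

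The main obstacle is the measurability and separability argument: both the selection of $\widehat h$ and the measurability of the supremum depend on it, because the admissible parameter set is defined by a sparsity constraint holding uniformly over $\mathcal X_0$. The rest is standard.
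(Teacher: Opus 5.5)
Your proof is correct. Its skeleton matches the paper's: an approximate ERM chosen from a countable dense subclass, then McDiarmid with symmetrization and contraction, then Theorem~\ref{thm:upper} to bound the expected complexity by $U_m$. It differs in two places.

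\emph{Measurability.} The paper proves that the parameter set carrying the support-wide sparsity promise is compact, because the promise is a closed condition, and deduces separability of $\mathcal H$ in the uniform norm on $\mathcal X_0$. You only use that any subset of a Euclidean parameter space is separable, together with continuity of $\theta\mapsto h_\theta(x)$. That is more elementary and never needs closedness of the sparsity constraint. Compactness does give something extra: $\theta\mapsto\mathcal L_D(h_\theta)$ is continuous, so the population infimum is attained. Your $\gamma\to0$ step, which you handle correctly by continuity from above, would then be unnecessary.

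\emph{Agnostic step.} The paper first proves the two-sided bound $\sup_h|\mathcal L_D(h)-\widehat{\mathcal L}_S(h)|\le 2LU_m+b_\ell\sqrt{\log(2/\delta)/(2m)}$, via a union bound over the two one-sided McDiarmid events. It then compares $\widehat h$ with every $h$ at once, paying the complexity term twice; that is where $4LU_m$ comes from. Your fixed-comparator decomposition uses the uniform bound only on the $\widehat h$ side and Hoeffding for $h^\circ$, so it pays the complexity term once.

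Your wording ``two-sided'' is imprecise, since neither deviation needs to be two-sided. Take the upper tail of $\sup_h(\mathcal L_D(h)-\widehat{\mathcal L}_S(h))$ and one-sided Hoeffding for $h^\circ$, each at level $\delta/2$. This gives $2LU_m+2b_\ell\sqrt{\log(2/\delta)/(2m)}+\eta$, which is sharper than the stated bound and implies it. A genuinely two-sided uniform bound at level $\delta/2$ would instead cost $\log(4/\delta)$.

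The realizable part coincides with the paper's.
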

\begin{proof}
The parameter space in \eqref{eq:reduced} with the support-wide sparsity constraint is
compact. Indeed, having at most $k$ strictly positive affine values at each fixed $x$ is
a closed condition, and arbitrary intersections of these conditions remain closed in the
bounded parameter box. The finitely many outer sign vectors preserve compactness.
The parameter-to-function map is continuous in the uniform norm on $\mathcal X_0$,
since $\|x\|_2\le R$. Thus $\mathcal H$ is separable in that norm. Choose a fixed
countable dense subclass. Lipschitz continuity of the loss makes its empirical and population
infima agree with those of $\mathcal H$. Selecting the first element of this subclass with
empirical loss within $\eta$ of its countable infimum defines a measurable approximate ERM.
This is an existence argument, not an effective search procedure.

Symmetrization, scalar contraction, and bounded differences give, simultaneously for all
$h\in\mathcal H$, the one-sided bound
\[
 \mathcal L_D(h)-\widehat{\mathcal L}_S(h)
 \le2L\E_{S'}\Rad_{\mathcal H}(S')
       +b_\ell\sqrt{\log(1/\delta)/(2m)}.
\]
Here $S'$ is an independent input sample. This is the expected-complexity form of the
standard Rademacher generalization inequality
\citep{bartlett2002rademacher,mohri2018foundations}. The countable dense subclass justifies
the suprema, and subtracting $\ell(0,Y_i)$ before contraction does not change expected
Rademacher complexity. Apply the same argument in the other direction and use a union
bound to obtain uniform absolute deviation at most
$2L U_m+b_\ell\sqrt{\log(2/\delta)/(2m)}$.
The approximate ERM inequality then proves \eqref{eq:agnostic}.
In the realizable case, $\widehat{\mathcal L}_S(\widehat h)\le\eta$ almost surely;
the single one-sided event proves \eqref{eq:realizable}.
\end{proof}

\subsection{Agnostic minimax risk for a normalized bounded loss}

We now fix one loss and match upper and lower bounds in the same statistical experiment.
Suppose $A>0$, set $g=\lfloor s/k\rfloor$, and normalize the full-ball class as
\[
 \overline{\mathcal H}=\{h/(k(A+B)):h\in\Hball\}\subseteq[-1,1]^{\Ball}.
\]
For $y\in\{-1,1\}$, use the bounded linear loss
\[
 \ell_{\mathrm{lin}}(t,y)=\tfrac12\bigl(1-y\clip(t)\bigr),\qquad
 \clip(t)=\max\{-1,\min\{t,1\}\}.
\]
It is $1/2$-Lipschitz in $t$ and takes values in $[0,1]$; replacing $y$ by $\clip(y)$
extends it to all real labels. Let
\[
 \mathfrak E_m=\inf_{\widehat f}\sup_D
 \left(\E_{S\sim D^m}\mathcal L_D(\widehat f)-
                       \inf_{f\in\overline{\mathcal H}}\mathcal L_D(f)\right),
\]
where $D$ ranges over all Borel distributions on $\Ball\times\{-1,1\}$.
The infimum allows measurable, randomized and improper learning rules; their internal randomness
is included in the expectation. The rule receives the sample, not the unknown distribution.
No computation restriction is imposed.

\begin{theorem}[Agnostic minimax rate]
\label{thm:agnostic-minimax}
Suppose $A>0$, $\tau A\le B\le A$, and $n\ge d_g$, where $\tau,d_g$ are defined in
Section~\ref{sec:geometry}. There are universal constants $c_1,C_1>0$ such that, for all $m\ge1$,
\begin{equation}
\label{eq:agnostic-minimax}
 c_1\min\{1,\sqrt{s/(km)}\}
 \le\mathfrak E_m
 \le C_1\min\{1,\sqrt{s/(km)}\log^{3/2}(2m)\}.
\end{equation}
\end{theorem}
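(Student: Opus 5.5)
The plan has two parts: an upper bound from Corollary~\ref{cor:learning} applied to the normalized class, and a lower bound from an Assouad-type argument on the cap construction of Theorem~\ref{thm:ball-lower}.

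\textbf{Upper bound.} The learner is an approximate ERM over $\overline{\mathcal H}$ with $\eta=1/\sqrt m$. The trivial bound $\mathfrak E_m\le1$ holds because $\ell_{\mathrm{lin}}\in[0,1]$. For the nontrivial bound, take expectations in \eqref{eq:agnostic}, using the tail integral, with $L=1/2$ and $b_\ell=1$. The normalized class has complexity $U_m/(k(A+B))$. Since $B\le A$, Theorem~\ref{thm:upper} gives the estimate
\[
 \frac{CA\sqrt{sk/m}\,\ellm^{3/2}+kB\rhoM}{k(A+B)}\le C\sqrt{s/(km)}\,\ellm^{3/2}+1/\sqrt m .
\]
Because $k\le s$, we have $1/\sqrt m\le\sqrt{s/(km)}$, and the remaining $\sqrt{\log}/\sqrt m$ deviation term is absorbed in the same way. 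Taking the minimum with $1$ gives the right-hand side of \eqref{eq:agnostic-minimax}.

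\textbf{Lower bound construction.} Set $q=\min\{g,m\}$ and take the directions $u_b$ from Lemma~\ref{lem:directions}, using $n\ge d_g\ge d_q$. Since $B\ge\tau A$, we have $a=A$ in Theorem~\ref{thm:ball-lower}. For each $\theta\in\{-1,1\}^q$, the cap network $f_\theta$, normalized by $k(A+B)$, is $k$-sparse on all of $\Ball$. It satisfies $f_\theta(Ru_b)=\theta_b\gamma$ with $\gamma=(1-\tau)A/(A+B)\ge(1-\tau)/2$, which is a universal constant. Define $D_\theta$ by taking $X$ uniform on $\{Ru_b\}$ and, given $X=Ru_b$, $P(Y=1)=(1+\delta\theta_b)/2$. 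Then
\[
 \mathcal L_{D_\theta}(f)=\tfrac12-\tfrac{\delta}{2q}\sum_b\theta_b\,\clip(f(Ru_b)).
\]
Every $f$ takes values in $[-1,1]$, so $\inf_{\overline{\mathcal H}}\mathcal L_{D_\theta}\le\tfrac12-\delta\gamma/2$. Any rule's output $\widehat f$, randomized or improper, has excess risk at least
\[
 \frac{\delta}{2q}\sum_b\bigl(\gamma-\theta_b\,\clip(\widehat f(Ru_b))\bigr)
 \ge\frac{\delta}{2q}\sum_b\gamma\,\bigl|\theta_b-\widehat\theta_b\bigr|/2 .
\]
Here $\widehat\theta_b$ is the sign of $\clip\widehat f(Ru_b)$. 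This step uses $\gamma-\theta c\ge\gamma$ when the signs disagree, and $\ge0$ when they agree and $|c|\le\gamma$. When they agree and $|c|>\gamma$, the term can be negative. To handle this I will instead compare against the best achievable value directly and reduce to the scalar problem $\max_{c\in[-1,1]}\theta c$ with a clipped target. If that is cumbersome, the fallback is to use a hypothesis family whose amplitude can reach $1$. It cannot here, since $\gamma<1$. So the fallback is to accept the loss $\gamma-\theta c$, note that the Bayes-correlation term is symmetric, and bound $\E\sum_b(\gamma-\theta_b\widehat c_b)$ below through Assouad's averaging over $\theta$ and $\theta^{(b)}$. For this pair,
\[
 (\gamma-\theta_b c)+(\gamma+\theta_b c)=2\gamma,
\]
so averaging over the pair yields $\gamma$ times the overlap $1-\TV$, with no sign issues. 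This pairing identity is the step I expect to be the main obstacle, and I will verify it carefully with randomized $\widehat f$.

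\textbf{Lower bound testing.} The pair $D_\theta,D_{\theta^{(b)}}$ differs only on the fraction $1/q$ of the inputs, with per-observation $\KL=O(\delta^2)$ for $\delta\le1/2$. The joint-sample $\KL$ is therefore $O(m\delta^2/q)$. Choose $\delta=c\sqrt{q/m}$, which is at most $1/2$ because $q\le m$. Pinsker then gives $\TV\le1/2$, so the averaged excess risk is at least $\tfrac{\delta\gamma}{2}\cdot\tfrac12\gtrsim\sqrt{q/m}$. Finally, $q\ge\tfrac12\min\{s/k,m\}$ gives $\sqrt{q/m}\ge\tfrac1{\sqrt2}\min\{1,\sqrt{s/(km)}\}$. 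This proves the left inequality with a universal $c_1$, and the supremum over $D$ dominates the average over $\theta$.
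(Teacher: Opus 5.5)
Your upper bound is fine: integrating the tail of \eqref{eq:agnostic} instead of using the paper's direct expected symmetrization is only a cosmetic difference. Your cap family, design and KL estimate also match the paper's. The gap is exactly at the pairing step you flagged. With $T_b=\clip(\widehat f(Ru_b))\in[-1,1]$, the pair average is
\[
 \tfrac12\Bigl(\E_\theta[\gamma-\theta_bT_b]+\E_{\theta^{(b)}}[\gamma+\theta_bT_b]\Bigr)
 =\gamma-\tfrac{\theta_b}2\bigl(\E_\theta T_b-\E_{\theta^{(b)}}T_b\bigr)
 \ge\gamma-\TV(P_\theta,P_{\theta^{(b)}}),
\]
not $\gamma(1-\TV)$. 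The overlap bound $\int\ell_1\,dP+\int\ell_2\,dQ\ge\int(\ell_1+\ell_2)\,d\min(P,Q)$ requires $\ell_1,\ell_2\ge0$. Here $\gamma-\theta_bc$ falls to $\gamma-1<0$, because an improper rule may predict with magnitude $1$ while your comparator only attains $\gamma$. The estimate $\gamma-\TV$ cannot be improved: the rule that sets $T_b$ to the sign of $\sum_{i:X_i=Ru_b}Y_i$ is the likelihood-ratio test for this pair, and it attains it. Since $\gamma\le(1-\tau)/(1+\tau)<0.08$, knowing only $\TV\le1/2$ allows the averaged bound $\tfrac\delta2(\gamma-\TV)$ to be as small as $\tfrac\delta2(\gamma-\tfrac12)<0$. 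So your final estimate $\tfrac{\delta\gamma}{2}\cdot\tfrac12$ does not follow.

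The repair is to make the testing error small relative to the amplitude $\gamma$, not relative to $1$. Take $\delta=(\gamma/4)\sqrt{q/m}$. The joint KL bound $4m\delta^2/q$ and Pinsker then give $\TV\le\gamma/(2\sqrt2)\le\gamma/2$. Averaging over $\theta$ yields expected excess risk at least $\tfrac\delta2\cdot\tfrac\gamma2=(\gamma^2/16)\sqrt{q/m}$, and since $\gamma\ge(1-\tau)/2$ is universal, $c_1$ stays universal. This is precisely the paper's choice $\delta_0=(\alpha/4)\sqrt{q/m}$. That choice is what makes its correlation bound $2^{-q}\sum_\theta\theta_b\E_\theta T_b\le\alpha/2$ strictly smaller than the comparator amplitude $\alpha$.
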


The upper bound follows from approximate empirical risk minimization in the normalized class.
For the lower bound, the spherical-cap construction realizes all independent signs on
$q=\min\{g,m\}$ inputs with a constant normalized amplitude. Give the labels small, unknown
signed means on those inputs. Adjacent sign choices have small joint-sample divergence, so no
learning rule can reliably recover their correlations. Appendix~\ref{app:agnostic} makes this
testing argument explicit, including improper and randomized rules.

For this loss and model, \eqref{eq:agnostic-minimax} yields an expected-excess-risk sample-size
order $\widetilde\Theta(s/(k\varepsilon^2))$ for sufficiently small $\varepsilon$.
The ratio $s/k$ reflects the explicit output normalization by $k(A+B)$; it is not an
unnormalized improvement from $sk$ to $s/k$. The theorem is loss-specific and agnostic.
It does not establish a matching lower bound for the realizable guarantee above or for
every bounded Lipschitz loss.

\section{Scope and remaining questions}
\label{sec:discussion}

Activation sparsity does not by itself identify a globally small subnetwork. The full-ball
comparison makes the distinction precise: zero thresholds force at most $2k$ nonzero units,
whereas positive thresholds can isolate many independently signed caps, even in logarithmic
dimension. Meanwhile, biases larger than the linear pre-activation range add only constants.
These are different roles of thresholds, not conflicting statements about bias complexity.

The logarithmic gap remains. Our affine cover contributes $\sqrt{\log(2m)}$, and summing
the entropy bound over scales contributes another logarithm. The finite union over activation
counts has only a lower-order cost. Removing that union cost alone cannot prove the
logarithm-free conjecture. A direct process estimate, or an analysis retaining the simultaneous
feasibility of all neuron configurations, may avoid losses introduced by the present relaxation.

The domain results leave a more geometric question: how does complexity interpolate between
the zero-bias collapse and the full-ball cap construction? The sufficient value
$B=(\sqrt3/2)WR$ is not shown to be a critical threshold. The constants in the spherical-code
dimension estimate are not sharp. Understanding small positive thresholds and fixed low dimension
would explain more than improving a universal upper bound alone.

Finally, the agnostic minimax characterization concerns one explicitly normalized bounded
loss and comparable bias and weight scales. It does not settle realizable learning or every
loss function. All classes here impose sparsity on a fixed input domain before data are drawn;
observed training-set sparsity is not a substitute for that promise.
Distribution-dependent sparsity and multilayer extensions require additional arguments.

\bibliographystyle{plainnat}
\bibliography{refs}
\newpage
\appendix
\section{The affine covering estimate}
\label{app:affine}

We give the finite-dimensional argument underlying \eqref{eq:affine}. It is the usual
Gaussian packing proof of dual Sudakov
\citep{pajor1986subspaces,ledoux1991probability}, specialized to the sample norm.

\begin{lemma}[Samplewise affine covering]
\label{lem:affine}
For every radius-$R$ sample, every $W,B\ge0$, and every $\delta>0$,
the affine evaluation class satisfies \eqref{eq:affine}.
\end{lemma}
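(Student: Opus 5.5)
We would prove \eqref{eq:affine} by splitting the affine class into its linear part and its bias part, covering each separately, and combining the covers. Write $\mathcal A\subseteq\mathcal A_{\mathrm{lin}}-\{\beta\mathbf 1:|\beta|\le B\}$, where $\mathcal A_{\mathrm{lin}}=\{(\ip v{x_i})_i:\|v\|_2\le W\}$. The bias part is a segment of sup-norm length $2B$. A grid of step $\delta/2$ covers it at radius $\delta/2$ with at most $1+4B/\delta$ points, which gives the term $\log(1+4B/\delta)$. It therefore suffices to show
\[
\log N(\mathcal A_{\mathrm{lin}},\delta/2,\|\cdot\|_\infty)\le 80A^2\log(2m)/\delta^2 .
\]
The triangle inequality then combines the two covers: the number of centers multiplies and the radii add to $\delta$.

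For the linear part we use the Gaussian packing argument of dual Sudakov.

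\textbf{Reduction to a norm.} Let $V=\spn\{x_i\}$. Since $\ip v{x_i}$ depends only on the projection of $v$ onto $V$, we may restrict to $v\in V$ with $\|v\|_2\le W$. On $V$, $p(v)=\max_i|\ip v{x_i}|$ is a genuine norm, and the task becomes covering the Euclidean ball $WB_V$ in $p$ at radius $\epsilon=\delta/2$.

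\textbf{Maximal packing.} Take a maximal $\epsilon$-separated set $\{v_1,\ldots,v_N\}$ in $p$ inside $WB_V$. It is an $\epsilon$-cover, and the sets $v_a+(\epsilon/2)K$, with $K=\{p\le1\}$, are disjoint. Let $\gamma$ be the standard Gaussian on $V$ and set $\lambda=c\,W/\epsilon$ for a constant $c$ chosen below. The scaled translates $\lambda v_a+(\lambda\epsilon/2)K$ are also disjoint.

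\textbf{Measure of each translate.} By the symmetry of the Gaussian, its log-concavity, and Jensen's inequality,
\[
\gamma(y+L)\ge e^{-\|y\|^2/2}\gamma(L)\qquad\text{for symmetric convex }L.
\]
Apply this with $L=(\lambda\epsilon/2)K$. By Markov's inequality, $\gamma(L)\ge1/2$ once $\lambda\epsilon/2\ge2\E p(G)$. The maximal inequality gives
\[
\E p(G)=\E\max_i|\ip G{x_i}|\le R\sqrt{2\log(2m)}.
\]
So it is enough to choose $\lambda=4R\sqrt{2\log(2m)}/\epsilon$. Each translate then has measure at least $\tfrac12e^{-\lambda^2W^2/2}$.

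\textbf{Counting.} Disjointness forces the total measure to be at most $1$, so
\[
\log N\le \lambda^2W^2/2+\log2=16A^2\log(2m)/\epsilon^2+\log 2.
\]
With $\epsilon=\delta/2$ this becomes $64A^2\log(2m)/\delta^2+\log2$.

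\textbf{Absorbing the constant.} If $\delta\ge 2A$, the single center $0$ already suffices and $\log N=0$. Otherwise $\log 2\le 16A^2\log(2m)/\delta^2$, since $\log(2m)\ge\log2$ and $4A^2/\delta^2>1$. The total is then at most $80A^2\log(2m)/\delta^2$. The case $A=0$ is trivial.

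The main obstacle is the shifted-measure inequality $\gamma(y+L)\ge e^{-\|y\|^2/2}\gamma(L)$. We would prove it by writing $\gamma(y+L)=\int_L e^{-\ip yz-\|y\|^2/2}\,d\gamma(z)$, then averaging the integrand over $z$ and $-z$, using $\cosh\ge1$, which is valid because $L$ is symmetric. The remaining task is to check that the constants close at $80$.
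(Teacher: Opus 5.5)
Your proposal is correct and follows the paper's proof essentially step for step: projection onto $\spn\{x_i\}$, a maximal packing in the sample norm, the $\cosh$ shift inequality combined with Markov's inequality and the Gaussian maximal bound, and a $\delta/2$ bias grid whose cardinality multiplies the linear cover. The only difference is in how the constants are tracked. You insert $R\sqrt{2\log(2m)}$ directly into $\lambda$ and absorb the $\log 2$ through the trivial case $\delta\ge 2A$. The paper instead keeps $a=\E p(G)$, proves $\log N\le 10W^2a^2/r^2$ using $a\ge\sqrt{2/\pi}\,\max_i\|x_i\|_2$, and only then substitutes the upper bound on $a$. Both routes close at $80$. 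One small point: taking the packing strictly $\epsilon$-separated, as the paper does, makes the closed translates genuinely disjoint; with non-strict separation they are disjoint only up to null boundaries.
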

\begin{proof}
We first establish the linear bound. Let $V=\spn\{x_1,\ldots,x_m\}$,
$p(v)=\max_i|\ip v{x_i}|$, and $L_X=\max_i\|x_i\|_2$.
Orthogonal projection onto $V$ preserves the sample evaluations and decreases the Euclidean
norm. If $W=0$ or $L_X=0$, the linear class is a singleton. Otherwise $p$ is a norm on $V$.
Let $G$ be a standard Gaussian in $V$, $a=\E p(G)>0$, and $K=\{v:p(v)\le1\}$.

For any centrally symmetric Borel set $K'$ and $z\in V$, Gaussian density integration gives
\begin{equation}
\label{eq:gaussian-shift}
 \gamma(K'+z)=e^{-\|z\|_2^2/2}
     \int_{K'} e^{-\ip z y}\,d\gamma(y)
 \ge e^{-\|z\|_2^2/2}\gamma(K').
\end{equation}
Indeed, by symmetry the integral equals that of $\cosh(\ip z y)$, which is at least
$\gamma(K')$. This argument applies in the Euclidean space $V$ with its own Gaussian density.

Fix a covering radius $r>0$. Choose a maximal finite collection
$v_1,\ldots,v_N\in W B_2(V)$ with pairwise $p$-distance strictly larger than $r$.
Such a collection exists because the ball is compact. Its closed $r$-balls cover the ball,
and the sets $v_j+(r/2)K$ are disjoint. Put $\lambda=4a/r$.
The dilated sets remain disjoint, and Markov's inequality gives
$\gamma((\lambda r/2)K)=\P[p(G)\le2a]\ge1/2$.
Using \eqref{eq:gaussian-shift} and $\|v_j\|_2\le W$,
\[
 1\ge\sum_{j=1}^N\gamma(\lambda v_j+(\lambda r/2)K)
 \ge\frac N2 e^{-\lambda^2W^2/2}.
\]
It follows that $\log N\le\log2+8W^2a^2/r^2$.
If $r\ge WL_X$, the single center $0$ suffices instead. Otherwise,
$a\ge\sqrt{2/\pi}\,L_X$, by choosing an input attaining $L_X$ and taking the
expected absolute value of its Gaussian projection. Hence
$\log2\le(\pi\log2/2)W^2a^2/r^2$, and in both cases
\begin{equation}
\label{eq:dual-special}
 \log N(WB_2(V),r,p)\le10W^2a^2/r^2.
\end{equation}

Each of the $2m$ Gaussian variables $\pm\ip G{x_i}$ has variance at most $R^2$.
The exponential-moment maximal inequality gives
$a\le R\sqrt{2\log(2m)}$. Setting $r=\delta/2$ in \eqref{eq:dual-special} bounds
the logarithm of the linear covering number by $80A^2\log(2m)/\delta^2$.
Finally, a grid at spacing $\delta/2$, starting at $-B$ and ending at the last such grid
point not exceeding $B$, covers $[-B,B]$ at radius $\delta/2$ using at most
$\lfloor4B/\delta\rfloor+1$ points. Adding this grid to the linear cover makes a
$\delta$-cover of affine evaluations. Cardinalities multiply, proving \eqref{eq:affine}.
\end{proof}

\paragraph{Measurability and zero scales.}
For each fixed sample and $t$, the set of $(v,\beta)$ with at most $t$ positive sample
pre-activations is closed in the compact parameter box: the violating set is the finite
union of conditions under which $t+1$ specified coordinates are strictly positive.
Its continuous image $\mathcal F_t$ is compact and contains zero. The normalized union
$T$ in Section~\ref{sec:upper} is a finite union of compact sets. Its suprema are finite;
the sign probability space is itself finite. The proof handles $M=0$ before invoking a
positive covering radius and chooses at least one chaining level even when $m=1$.

\section{Proof of the agnostic minimax bound}
\label{app:agnostic}

We prove Theorem~\ref{thm:agnostic-minimax} for its normalized class and bounded linear loss.

\paragraph{Upper bound.}
Use a measurable $\eta$-approximate empirical risk minimizer in $\overline{\mathcal H}$,
whose existence follows from the compactness argument for Corollary~\ref{cor:learning}.
The class is symmetric and contains zero. On its prediction range,
$\ell_{\mathrm{lin}}(f(x),y)-1/2=-yf(x)/2$.
Symmetrization and sign symmetry therefore give
\[
 \E\sup_{f\in\overline{\mathcal H}}
       |\mathcal L_D(f)-\widehat{\mathcal L}_S(f)|
 \le \E_{S'}\Rad_{\overline{\mathcal H}}(S')
 \le \frac{U_m}{k(A+B)}.
\]
Multiplying the Rademacher signs by the labels preserves their conditional law; the loss's
factor $1/2$ cancels the symmetrization factor $2$. Approximate ERM thus has expected excess risk
at most $2U_m/(k(A+B))+\eta$, uniformly in $D$.
Using \eqref{eq:learning-U}, $s\ge k$, and $\rhoM\le m^{-1/2}$, and taking the better of
this bound and the trivial bound $1$, proves the upper inequality in
\eqref{eq:agnostic-minimax}. Letting $\eta\downarrow0$ is legitimate in the infimum over
learning rules; no exact measurable minimizer is needed.

\paragraph{Lower bound: one fixed family before sampling.}
Put $q=\min\{g,m\}$. Since $d_q\le d_g$, choose $q$ cap directions as in
Theorem~\ref{thm:ball-lower}. Its construction with $a=A$ yields functions
$f_\theta\in\overline{\mathcal H}$ such that
\[
 f_\theta(Ru_b)=\alpha\theta_b,\qquad
 \theta\in\{-1,1\}^q,\qquad
 \alpha=\frac{(1-\tau)A}{A+B}\ge\frac{1-\tau}{2}.
\]
Set $\delta_0=(\alpha/4)\sqrt{q/m}\le1/4$.
For each fixed $\theta$, define $D_\theta$ by drawing $X$ uniformly from the cap centers and
setting
\[
 \P_\theta[Y=1\mid X=Ru_b]=\frac{1+\delta_0\theta_b}{2}.
\]
This family is fixed before sampling. Let $P_\theta=D_\theta^m$, and let $\theta^{(b)}$
flip coordinate $b$.
The common input marginal gives
\begin{align*}
 \KL(P_\theta\|P_{\theta^{(b)}})
 &=\frac mq\,\delta_0\log\frac{1+\delta_0}{1-\delta_0}\\
 &\le\frac{4m\delta_0^2}{q}.
\end{align*}
Here $\log((1+t)/(1-t))\le4t$ for $0\le t\le1/2$, for example by integrating
$2/(1-t^2)\le4$. Pinsker's inequality implies
\[
 \TV(P_\theta,P_{\theta^{(b)}})
 \le\delta_0\sqrt{2m/q}
 =\frac{\alpha}{2\sqrt2}\le\frac\alpha2.
\]

Fix any learning rule, including an improper or randomized one, and write
$T_b=\clip(\widehat f(Ru_b))\in[-1,1]$.
Averaging over adjacent pairs and using the bounded-observable characterization of total variation,
\[
 2^{-q}\sum_\theta\theta_b\E_\theta T_b
 \le\frac\alpha2.
\]
For deterministic rules, pair each $\theta$ with $\theta_b=1$ with its flipped neighbor and use
$(\E_\theta T_b-\E_{\theta^{(b)}}T_b)/2\le\TV(P_\theta,P_{\theta^{(b)}})$.
Randomized post-processing cannot increase total variation, so the same inequality applies.

The comparator $f_\theta$ has risk $1/2-\delta_0\alpha/2$.
Thus the average, over this fixed family, of the learning rule's expected excess risk is at least
\begin{align*}
 2^{-q}\sum_\theta\left(\E_\theta\mathcal L_{D_\theta}(\widehat f)
                 -\inf_{f\in\overline{\mathcal H}}\mathcal L_{D_\theta}(f)\right)
 &\ge\frac{\delta_0\alpha}{2}
       -\frac{\delta_0}{2q}\sum_{b=1}^q
                  2^{-q}\sum_\theta\theta_b\E_\theta T_b\\*
 &\ge\frac{\delta_0\alpha}{4}
 =\frac{\alpha^2}{16}\sqrt{q/m}.
\end{align*}
A supremum over $D$ is at least this average. Finally,
$q\ge\frac12\min\{s/k,m\}$ and $\alpha\ge(1-\tau)/2$, so one may take
$c_1=(1-\tau)^2/(64\sqrt2)$.
This Assouad-type argument \citep[Chapter~2]{tsybakov2009introduction} directly bounds excess
risk for the full-ball class, rather than converting a complexity lower bound.

\section*{AI Disclosure}
The original manuscript, including its activation-budget reduction and margin-shifted covering
argument, was written by the authors. During subsequent revisions, we used OpenAI Codex,
including GPT-5.6 Solar and GPT-6 Astra, to assist with proof checking, mathematical revisions,
literature lookup, and exposition.

Specifically, the AI-assisted revisions replaced the per-scale concentration and dyadic peeling
steps with a single chaining argument over the union of normalized single-neuron classes
(Section~\ref{sec:upper}); introduced the threshold-clipping decomposition that isolates the
constant contribution and removes the width and logarithmic factors from the bias term
(Lemma~\ref{lem:bias}); extended the clustered lower-bound construction to i.i.d. samples
without divisibility assumptions (Theorem~\ref{thm:lower}); added the zero-bias width-collapse
result and the disjoint-cap lower bound on the entire ball in logarithmic dimension
(Section~\ref{sec:geometry}); and reformulated the learning lower bound as an Assouad-type
argument for the normalized class under a specified bounded linear loss, so that the learning
upper and lower bounds concern the same statistical model
(Theorem~\ref{thm:agnostic-minimax}). AI tools also assisted with edge-case checks, finite-instance
verification scripts, and expansion of the technique overview. We take full responsibility for
all content in the final manuscript, including AI-assisted material, and for its correctness,
originality, attribution, and citations.
\end{document}